\documentclass[a4paper,UKenglish,cleveref, autoref, thm-restate]{lipics-v2021}


\nolinenumbers
\hideLIPIcs

\bibliographystyle{plainurl}

\usepackage{algorithm}
\usepackage{algorithmicx}
\usepackage[noend]{algpseudocode}

\usepackage{adjustbox}         
\usepackage[table]{xcolor}
\usepackage{multirow,siunitx}
\usepackage{caption}
\usepackage{etoolbox}

\sisetup
  {
    math-rm=\textrm,
    text-rm=\rmfamily
  }

\usepackage{color}
\usepackage{url}
\usepackage{stmaryrd}
\usepackage{soul}

\usepackage{pgf}

\usepackage{amsmath,amssymb,amsthm} 
\usepackage{latexsym,bm,amsbsy}
\usepackage{enumitem}
\usepackage{booktabs,multirow}
\usepackage{xspace}
\usepackage{stmaryrd}

\usepackage{tikz}
\usepackage{forest}

\usetikzlibrary{arrows,decorations.pathmorphing,decorations.footprints,fadings,calc,trees,mindmap,shadows,decorations.text,patterns,positioning,shapes,matrix,fit}
\usetikzlibrary{arrows,shadows,backgrounds}
\usetikzlibrary{arrows.meta}
\usetikzlibrary{positioning,fit}
\usetikzlibrary{automata}
\usetikzlibrary{matrix}
\usetikzlibrary{shapes.symbols,shapes.misc,shapes.arrows,shapes.geometric}
\usetikzlibrary{matrix,chains,scopes,decorations.pathmorphing}

\title{Efficient Explanations for \\Knowledge Compilation Languages} 

\titlerunning{Efficient Explanations for KC Languages} 

\author{Xuanxiang Huang}{Universit\'{e} de Toulouse, Toulouse,
  France}{xuanxiang.huang@univ-toulouse.fr}{https://orcid.org/0000-0002-3722-7191}{} 

\author{Yacine Izza}{Universit\'{e} de Toulouse, Toulouse,
  France}{yacine.izza@univ-toulouse.fr}{https://orcid.org/0000-0002-7774-1945}{} 

\author{Alexey Ignatiev}{Monash University, Melbourne,
  Australia}{alexey.ignatiev@monash.edu}{https://orcid.org/0000-0002-4535-2902}{} 

\author{Martin C. Cooper}{Universit\'{e} Paul Sabatier, IRIT,
  Toulouse, France}{martin.cooper@irit.fr}{https://orcid.org/0000-0003-4853-053X}{} 

\author{Nicholas Asher}{IRIT, CNRS, Toulouse,
  France}{nicholas.asher@irit.fr}{https://orcid.org/0000-0002-7689-8246}{} 

\author{Joao Marques-Silva}{IRIT, CNRS, Toulouse,
  France}{joao.marques-silva@irit.fr}{https://orcid.org/0000-0002-6632-3086}{} 




\authorrunning{X. Huang, Y.\ Izza, A.\ Ignatiev, M.\ C.\ Cooper,
  N.\ Asher and J.\ Marques-Silva}


\Copyright{X. Huang, Y.\ Izza, A.\ Ignatiev, M.\ C.\ Cooper, N.\ Asher, J.\ Marques-Silva}



\begin{CCSXML}
<ccs2012>
<concept>
<concept_id>10003752.10003790.10003794</concept_id>
<concept_desc>Theory of computation~Automated reasoning</concept_desc>
<concept_significance>500</concept_significance>
</concept>
</ccs2012>
\end{CCSXML}

\ccsdesc[500]{Theory of computation~Automated reasoning}

\keywords{Machine Learning, Explainable AI, Knowledge Compilation,
  Tractability} 

\category{} 

\relatedversion{} 


\funding{%
This work was supported by the AI Interdisciplinary Institute ANITI, funded by
the French program ``Investing for the Future -- PIA3'' under Grant
agreement no. ANR-19-PI3A-0004, and by the H2020-ICT38 project COALA
``Cognitive Assisted agile manufacturing for a Labor force supported
by trustworthy Artificial intelligence''.
}




\EventShortTitle{CoRR 2021}
\EventAcronym{CoRR}
\EventYear{2021}
\SeriesVolume{01}
\ArticleNo{001}







\newcommand{\todoF}[2]{}



\newcommand{\fml}[1]{{\mathcal{#1}}}

\newcommand{\tn}[1]{\textnormal{#1}}
\newcommand{\tbf}[1]{\textnormal{\bfseries #1}}

\newcommand{\mbf}[1]{\ensuremath\mathbf{#1}}
\newcommand{\mbb}[1]{\ensuremath\mathbb{#1}}

\newcommand{\nnf}{\ensuremath\tn{NNF}}
\newcommand{\dnnf}{\ensuremath\tn{DNNF}}
\newcommand{\ddnnf}{\ensuremath\tn{d-DNNF}}
\newcommand{\sddnnf}{\ensuremath\tn{sd-DNNF}}
\newcommand{\decnnf}{\ensuremath\tn{dec-DNNF}}

\newcommand{\biglor}{\ensuremath\bigvee}
\newcommand{\bigland}{\ensuremath\bigwedge}

\definecolor{gray}{rgb}{.4,.4,.4}
\definecolor{midgrey}{rgb}{0.5,0.5,0.5}
\definecolor{middarkgrey}{rgb}{0.35,0.35,0.35}
\definecolor{darkgrey}{rgb}{0.3,0.3,0.3}
\definecolor{darkred}{rgb}{0.7,0.1,0.1}
\definecolor{midblue}{rgb}{0.2,0.2,0.7}
\definecolor{darkblue}{rgb}{0.1,0.1,0.5}
\definecolor{defseagreen}{cmyk}{0.69,0,0.50,0}
\newcommand{\jnote}[1]{\medskip\noindent$\llbracket$\textcolor{darkred}{joao}: \emph{\textcolor{middarkgrey}{#1}}$\rrbracket$\medskip}

\newcommand{\jnoteF}[1]{}

\newcounter{Comment}[Comment]
\setcounter{Comment}{1}

\DeclareMathOperator*{\nentails}{\nvDash}
\DeclareMathOperator*{\entails}{\vDash}

\DeclareMathOperator*{\limply}{\rightarrow}

\def\topbotatom#1{\hbox{\hbox to 0pt{$#1\bot$\hss}$#1\top$}}



\setlength{\tabcolsep}{0.325em}

\setlist{nosep}
\setlistdepth{5}



\newcommand{\findaxp}{\ensuremath\mathsf{findAXp}}
\newcommand{\findcxp}{\ensuremath\mathsf{findCXp}}
\newcommand{\oneaxp}{\ensuremath\mathsf{oneAXp}}
\newcommand{\isweakaxp}{\ensuremath\mathsf{isWeakAXp}}
\newcommand{\onecxp}{\ensuremath\mathsf{oneCXp}}
\newcommand{\isweakcxp}{\ensuremath\mathsf{isWeakCXp}}
\newcommand{\prtaxp}{\ensuremath\mathsf{reportAXp}}
\newcommand{\prtcxp}{\ensuremath\mathsf{reportCXp}}
\newcommand{\isvalid}{\ensuremath\mathsf{isValid}}
\newcommand{\isconsistent}{\ensuremath\mathsf{isConsistent}}

\algnewcommand\algorithmicinput{\textbf{input:}}


\newcommand{\SAT}{\ensuremath\mathsf{SAT}}
\newcommand{\outc}{\ensuremath\mathsf{outc}}

\algnewcommand\Input{\item[\algorithmicinput]}
\newcommand{\True}{\textbf{true}}
\newcommand{\False}{\textbf{false}}
\newcommand{\NOT}{\textbf{not}}

\newcommand{\OR}{\textbf{or}}

\newcolumntype{L}[1]{>{\raggedright\let\newline\\\arraybackslash\hspace{0pt}}m{#1}}
\newcolumntype{C}[1]{>{\centering\let\newline\\\arraybackslash\hspace{0pt}}m{#1}}
\newcolumntype{R}[1]{>{\raggedleft\let\newline\\\arraybackslash\hspace{0pt}}m{#1}}

\begin{document}

\maketitle

\begin{abstract}
  Knowledge compilation (KC) languages find a growing number of
  practical uses, including in Constraint Programming (CP) and in
  Machine Learning (ML). In most applications, one natural question is 
  how to explain the decisions made by models represented by a KC
  language.
  This paper shows that for many of the best known KC languages,
  well-known classes of explanations can be computed in polynomial
  time. These classes include deterministic decomposable negation
  normal form (d-DNNF), and so any KC language that is strictly less
  succinct than d-DNNF. Furthermore, the paper also investigates the
  conditions under which polynomial time computation of explanations
  can be extended to KC languages  more succinct than d-DNNF.
\end{abstract}

\section{Introduction}
\label{sec:intro}

The growing use of machine learning (ML) models in practical
applications raises a number of concerns related with fairness,
robustness, but also
explainability~\cite{lipton-cacm18,weld-cacm19,monroe-cacm21}.
Recent years have witnessed a number of works on computing
explanations for the predictions made by ML models\footnote{There is a
  fast growing body of work on the explainability of ML
  models. Example references include~\cite{pedreschi-acmcs19,xai-bk19,muller-xai19-ch01,miller-aij19,miller-acm-xrds19,anjomshoae-aamas19,russell-fat19a,zhu-nlpcc19,klein-corr19}.}.
Approaches to computing explanations can be broadly categorized as
heuristic~\cite{guestrin-kdd16,lundberg-nips17,guestrin-aaai18}, which
offer no formal guarantees of rigor, and
non-heuristic~\cite{darwiche-ijcai18,inms-aaai19,darwiche-ecai20,marquis-kr20},
which in contrast offer strong guarantees of rigor.
%
Non-heuristic explanation approaches can be further categorized into
compilation-based~\cite{darwiche-ijcai18,darwiche-aaai19,darwiche-ecai20}
and oracle-based~\cite{inms-aaai19,kwiatkowska-ijcai21}.

Compilation-based approaches resort to knowledge compilation (KC)
languages, often to compile the decision function associated with an
ML classifier~\cite{darwiche-ijcai18,darwiche-aaai19}.
As a result, more recent work studied KC languages from the
perspective of explainability, with the purpose of understanding the
complexity of computing
explanations~\cite{marquis-kr20,barcelo-nips20,marquis-corr21} but
also with the goal of identifying examples of queries of
interest~\cite{marquis-kr20,marquis-corr21}.
Observe that besides serving to compile the decision function of some
classifier, functions represented with KC languages can also be viewed
as classifiers.
In addition, explanations for the behavior of functions expressed in
KC languages find applications other than explaining ML models,
including explanations in constraint
programming~\cite{marquis-aij02,hooker-bk16,stuckey-cj19,guns-ecai20,guns-ijcai21}.
Furthermore, although recent
work~\cite{marquis-kr20,barcelo-nips20,marquis-corr21} analyzed the
complexity of explainability queries for different KC languages, it is
also the case that it is unknown which KC languages allow the
expressible functions to be explained efficiently, and which do not.
%
%
On the one hand,
\cite{marquis-kr20,marquis-corr21} proposes conditions not met by most
KC languages. On the other hand \cite{barcelo-nips20} studies
restricted cases of KC languages, but focusing on smallest
PI-explanations.
Also, since one key motivation for the use of KC languages is the
efficiency of reasoning, namely with respect to specific queries and
transformations~\cite{darwiche-jair02}, a natural question is whether
similar results can be obtained in the setting of explainability.

This paper studies the computational complexity of computing
PI-explanations~\cite{darwiche-ijcai18} and contrastive
explanations~\cite{miller-aij19} for classifiers represented with KC
languages.
Concretely, the paper shows that for any KC language that implements
in polynomial time the well-known queries of consistency (\tbf{CO})
and validity (\tbf{VA}), and the transformation of conditioning
(\tbf{CD}), then one PI-explanation or one contrastive explanation can
be computed in polynomial time.
This requirement is strictly less stringent than another one proposed
in earlier work~\cite{marquis-kr20}. As a result, for a large number
of KC languages, that include d-DNNF, one PI-explanation or one
contrastive explanation can be computed in polynomial time. The result
immediately generalizes to KC languages less succinct than d-DNNF,
e.g.\ OBDD, SDD, to name a few.
Moreover, for the concrete case of SDDs, the paper shows that
practical optimizations lead to clear performance gains.
Besides computing one explanation, one is often interested is obtained
multiple explanations, thus allowing a decision maker to get a better
understanding of the reasons supporting a decision. As a result, the
paper also proposes a MARCO-like~\cite{lpmms-cj16} algorithm
for the enumeration of both AXps and CXps.
Furthermore, the paper studies the computational complexity of
explaining generalizations of decision sets~\cite{leskovec-kdd16}, and
proposes conditions under which explanations can be computed in
polynomial time.
Finally, the paper studies multi-class classifiers, and again proposes
conditions for finding explanations in polynomial time.

The paper is organized as follows.
\cref{sec:prelim} introduces the definitions and notation used
throughout the paper.
\cref{sec:xpddnnf} shows that for a large class of KC languages, one
PI-explanation and one contrastive explanation can be computed in
polynomial time. Concretely, the paper shows that d-DNNF can be
explained in polynomial time, and so any less succinct language can
also be explained in polynomial time. Furthermore, the paper shows
that sentential decision diagrams (SDDs) enable practical
optimizations that yield more efficient algorithms in practice.
In addition, \cref{sec:xpddnnf} shows how to enumerate explanations
requiring one NP oracle call for each computed explanation.
\cref{sec:genxp} investigates a number of generalized classifiers,
which can be built from KC languages used as building blocks.
\cref{sec:res} assesses the computation of explanations of d-DNNF's
and SDDs in practical settings.
\cref{sec:conc} concludes the paper.

\section{Preliminaries}
\label{sec:prelim}

\subparagraph*{Classification problems \& formal explanations.}
%

This paper considers classification problems, which are defined on a
set of features (or attributes) $\fml{F}=\{1,\ldots,m\}$ and a set of
classes $\fml{K}=\{c_1,c_2,\ldots,c_K\}$.
Each feature $i\in\fml{F}$ takes values from a domain $\mbb{D}_i$.
In general, domains can be boolean, integer or real-valued, but in
this paper we restrict $\mbb{D}_i=\{0,1\}$ and $\fml{K}=\{0,1\}$.
(In the context of KC languages, we will replace 0 by $\bot$ and 1 by
$\top$. This applies to domains and classes.)
Feature space is defined as
$\mbb{F}=\mbb{D}_1\times{\mbb{D}_2}\times\ldots\times{\mbb{D}_m}=\{0,1\}^{m}$.
The notation $\mbf{x}=(x_1,\ldots,x_m)$ denotes an arbitrary point in
feature space, where each $x_i$ is a variable taking values from
$\mbb{D}_i$. The set of variables associated with features is
$X=\{x_1,\ldots,x_m\}$.
Moreover, the notation $\mbf{v}=(v_1,\ldots,v_m)$ represents a
specific point in feature space, where each $v_i$ is a constant
representing one concrete value from $\mbb{D}_i=\{0,1\}$.
An \emph{instance} (or example) denotes a pair $(\mbf{v}, c)$, where
$\mbf{v}\in\mbb{F}$ and $c\in\fml{K}$. (We also use the term
\emph{instance} to refer to $\mbf{v}$, leaving $c$ implicit.)
An ML classifier $\mbb{C}$ is characterized by a \emph{classification
function} $\kappa$ that maps feature space $\mbb{F}$ into the set of
classes $\fml{K}$, i.e.\ $\kappa:\mbb{F}\to\fml{K}$. (It is assumed
throughout that $\kappa$ is not constant, i.e.\ there are at least two
points $\mbf{v}_1$ and $\mbf{v}_2$ in feature space, where
$\kappa(\mbf{v}_1)\not=\kappa(\mbf{v}_2)$.)

%
We now define formal explanations.
Prime implicant (PI) explanations~\cite{darwiche-ijcai18} denote a
minimal set of literals (relating a feature value $x_i$ and a constant
$v_i\in\mbb{D}_i$) 
that are sufficient for the prediction\footnote{%
PI-explanations are related with abduction, and so are also referred
to as abductive explanations (AXp)~\cite{inms-aaai19}. More recently,
PI-explanations have been studied from a knowledge compilation
perspective~\cite{marquis-kr20,marquis-corr21}.}.
Formally, given $\mbf{v}=(v_1,\ldots,v_m)\in\mbb{F}$ with
$\kappa(\mbf{v})=c$, a \emph{weak} (or non-minimal) \emph{abductive
explanation} (weak AXp) is any subset $\fml{X}\subseteq\fml{F}$ such
that,
\begin{equation} \label{eq:axp}
  \forall(\mbf{x}\in\mbb{F}).
  \left[
    \bigwedge\nolimits_{i\in{\fml{X}}}(x_i=v_i)
    \right]
  \limply(\kappa(\mbf{x})=c)
\end{equation}
Any subset-minimal weak AXp is referred to as an AXp.
%
AXps can be viewed as answering a `Why?' question, i.e.\ why is some
prediction made given some point in feature space. A different view of
explanations is a contrastive explanation~\cite{miller-aij19}, which
answers a `Why Not?' question, i.e.\ which features can be changed to
change the prediction. A formal definition of \emph{contrastive
  explanation} (CXp) is proposed in recent work~\cite{inams-aiia20}.
Given $\mbf{v}=(v_1,\ldots,v_m)\in\mbb{F}$ with $\kappa(\mbf{v})=c$, a
\emph{weak} (or non-minimal) CXp is any subset
$\fml{Y}\subseteq\fml{F}$ such that,
\begin{equation} \label{eq:cxp}
  \exists(\mbf{x}\in\mbb{F}).\bigwedge\nolimits_{j\in\fml{F}\setminus\fml{Y}}(x_j=v_j)\land(\kappa(\mbf{x})\not=c) 
\end{equation}
Any subset-minimal weak CXp is referred to as a CXp.
Building on the results of R.~Reiter in model-based
diagnosis~\cite{reiter-aij87},~\cite{inams-aiia20} proves a minimal
hitting set (MHS) duality relation between AXps and CXps,
i.e.\ AXps are MHSes of CXps and vice-versa.

\subparagraph*{Knowledge compilation map.}
Following earlier
work~\cite{darwiche-jancl01,darwiche-jair02,darwiche-jair07}, we
define negated normal form ($\nnf$), decomposable $\nnf$ ($\dnnf$),
deterministic $\dnnf$ ($\ddnnf$), decision $\dnnf$ ($\decnnf$), and
also smooth $\ddnnf$ ($\sddnnf$). 

\begin{definition}[KC languages~\cite{darwiche-jair02}]\footnote{%
    We introduce KC languages that have been studied in earlier
    works~\cite{darwiche-jair02,marquis-aaai08,darwiche-ijcai11,fargier-gkr11,marquis-ijcai13}.
    For the sake of brevity, we define only the KC languages that are
    analyzed in greater detail in the paper. For the additional KC
    languages that are mentioned in the paper, the following references give 
    standard definitions:
    OBDD~\cite{darwiche-jair02},
    PI~\cite{darwiche-jair02},
    IP~\cite{darwiche-jair02},
    renH-C~\cite{marquis-aaai08}, AFF~\cite{marquis-aaai08},
    SDD~\cite{darwiche-ijcai11}, 
    dFSD~\cite{fargier-gkr11},
    and
    EADT~\cite{marquis-ijcai13}.}
  The following KC languages are studied in the paper:
  \begin{itemize}[nosep]
    \item 
      The language \emph{negated normal form} ($\nnf$) is the set of
      all directed acyclic graphs, where each leaf node is labeled
      with either $\top$, $\bot$, $x_i$ or $\neg{x_i}$, for
      $x_i\in{X}$. Each internal node is labeled with either $\land$
      (or \tn{AND}) or $\lor$ (or \tn{OR}).
    \item 
      The language \emph{decomposable} $\nnf$ ($\dnnf$) is the set of
      all NNFs, where for every node labeled with $\land$,
      $\alpha=\alpha_1\land\cdots\land\alpha_k$, no variables are
      shared between the conjuncts $\alpha_j$.
    \item 
      A $\ddnnf$ is a $\dnnf$, where for every node labeled with
      $\lor$, $\beta=\beta_1\lor\cdots\lor\beta_k$, each pair
      $\beta_p,\beta_q$, with $p\not=q$, is inconsistent,
      i.e.\ $\beta_p\land\beta_q\entails\bot$. 
    \item 
      An $\sddnnf$ is a $\ddnnf$, where for every node labeled with
      $\lor$, $\beta=\beta_1\lor\cdots\lor\beta_k$, each pair
      $\beta_p,\beta_q$ is defined on the same set of variables.
  \end{itemize}
\end{definition}
The focus of this paper is $\ddnnf$, but for simplicity of algorithms,
$\sddnnf$ is often considered~\cite{darwiche-jancl01}.
Moreover, the definition of SDD is
assumed~\cite{darwiche-ijcai11,bova-aaai16} (which is briefly overview
in~\autoref{ssec:sdd}).

Throughout the paper, a term $\rho$ denotes a conjunction of
literals. A term $\rho$ is consistent ($\rho\nentails\bot$) if the
term is satisfied in at least one point in feature space.

For the purposes of this paper, we will consider exclusively the
queries \tbf{CO} and \tbf{VA}, and the transformation \tbf{CD}, which
we define next. 
Let \tbf{L} denote a subset of $\nnf$. Hence, we have the following
standard definitions~\cite{darwiche-jair02}.

\begin{definition}[Conditioning~\cite{darwiche-jair02}]\footnote{%
    We introduce the KC queries and transformations that are relevant
    for the results in the paper. There are additional queries
    (e.g.\ \tbf{CE}, \tbf{IM}, \tbf{EQ}, \tbf{SE}, \tbf{CT}, \tbf{ME})
    and transformations (e.g.\ \tbf{FO}, \tbf{SFO},
    $\pmb{\land}$\tbf{C}, $\pmb{\land}\tbf{BC}$, $\pmb{\lor}$\tbf{C},
    $\pmb{\lor}$\tbf{BC}, $\pmb{\neg}$\tbf{C}), but are omitted for
    the sake of brevity. The interested reader is referred for
    example to~\cite{darwiche-jair02}.}
  Let $\Delta$ represent a propositional formula and let $\rho$ denote
  a consistent term. The \emph{conditioning} of $\Delta$ on $\rho$,
  denoted $\Delta|_{\rho}$ is the formula obtained by replacing each
  variable $x_i$ by $\top$ (resp.~$\bot$) if $x_i$ (resp.~$\neg{x_i}$)
  is a positive (resp.~negative) literal of $\rho$.
\end{definition}

\begin{definition}[Queries \& transformations~\cite{darwiche-jair02}]
The following queries and transformations are used throughout with
respect to a KC language \tbf{L}:
  \begin{itemize}[nosep]
  \item \tbf{L} satisfies the consistency (validity) query \tbf{CO}
    (\tbf{VA}) iff there exists a polynomial-time algorithm that maps
    every formula $\Delta$ from \tbf{L} to 1 if $\Delta$ is consistent
    (valid), and to 0 otherwise.
  \item \tbf{L} satisfies the conditioning transformation \tbf{CD} iff
    there exists a polynomial-time algorithm that maps every formula
    $\Delta$ from \tbf{L} and every consistent term $\rho$ into a
    formula that is logically equivalent to $\Delta|_{\rho}$.
  \end{itemize}
\end{definition}
There are additional queries and transformations of
interest~\cite{darwiche-jair02}, but these are beyond the goals of
this paper.
$\ddnnf$ has been studied in detail from the perspective of the
knowledge compilation (KC) map~\cite{darwiche-jair02}.
Hence, it is known that $\ddnnf$ satisfies the queries \tbf{CO},
\tbf{VA}, \tbf{CE}, \tbf{IM}, \tbf{CT}, \tbf{ME}, and the
transformation \tbf{CD}.

\begin{figure}
  \begin{subfigure}[b]{\textwidth}
    \begin{center}
      \scalebox{0.925}{\begin{tikzpicture}[-,%
    node distance={2.5cm}, thin,
    nonleaf/.style = {draw, circle},
    leafn/.style = {draw, rectangle, minimum size=0.575cm},
    level 1/.style={sibling distance=35mm},
    level 2/.style={sibling distance=25mm},
    level 3/.style={sibling distance=15mm},
  ]
  \node[nonleaf] (1) {$\land$}
  child { 
    node[nonleaf] (2) {$\lor$}
    child {
      node[nonleaf] (4) {$\land$}
      child {
        node[leafn] (8) {$x_1$}
      }
      child {
        node[leafn] (9) {$x_4$}
      }
    }
    child {
      node[nonleaf] (5) {$\land$}
      child {
        node[leafn] (10) {$\neg{x_1}$}
      }
    }
  }
  child { 
    node[nonleaf] (3) {$\lor$}
    child {
      node[leafn] (6) {$x_3$}
    }
    child {
      node[nonleaf] (7) {$\land$}
      child {
        node[leafn] (11) {$\neg{x_3}$}
      }
      child {
        node[leafn] (12) {$x_2$}
      }
    }
  }
  ;
  %
  \draw[] (5) -- (9);
\end{tikzpicture} }
      \caption{$\ddnnf$ $\fml{C}$ for
        $\kappa(x_1,x_2,x_3,x_4)=((x_1\land{x_4})\lor(\neg{x_1}\land{x_4}))\land(x_3\lor(\neg{x_3}\land{x_2}))$.} \label{fig:ex01a}
    \end{center}
  \end{subfigure}
  \smallskip
  
  \begin{subfigure}{\textwidth}
    \begin{center}
      \renewcommand{\tabcolsep}{0.5em}
      \renewcommand{\arraystretch}{1.05}
      \begin{tabular}{r|cccccccccccccccc} \toprule
        $x_1$
        & 0 & 0 & 0 & 0 & 0 & 0 & 0 & 0 & 1 & 1 & 1 & 1 & 1 & 1 & 1 & 1 
        \\
        $x_2$
        & 0 & 0 & 0 & 0 & 1 & 1 & 1 & 1 & 0 & 0 & 0 & 0 & 1 & 1 & 1 & 1 
        \\
        $x_3$
        & 0 & 0 & 1 & 1 & 0 & 0 & 1 & 1 & 0 & 0 & 1 & 1 & 0 & 0 & 1 & 1 
        \\
        $x_4$
        & 0 & 1 & 0 & 1 & 0 & 1 & 0 & 1 & 0 & 1 & 0 & 1 & 0 & 1 & 0 & 1 
        \\ \midrule
        $\kappa(x_1,x_2,x_3,x_4)$
        & 0 & 0 & 0 & 1 & 0 & 1 & 0 & 1 & 0 & 0 & 0 & 1 & 0 & 1 & 0 & 1 
        \\ \bottomrule
      \end{tabular}
      \caption{Truth table for d-DNNF $\fml{C}$.
        Throughout the paper, the instance considered is
        $\mbf{v}=(0,0,0,0)$, with prediction $c=0$.}
      \label{fig:ex01c}
    \end{center}
  \end{subfigure}
  \caption{Running example (adapted from~\cite{stuckey-cj19}).}
  \label{fig:ex01}
\end{figure}

\begin{example} \label{ex:ex01}
  \autoref{fig:ex01} shows the running example used throughout the
  paper.
  $\fml{F}=\{1,2,3,4\}$, $X=\{x_1,x_2,x_3,x_4\}$, and
  $\kappa(x_1,x_2,x_3,x_4)=((x_1\land{x_4})\lor(\neg{x_1}\land{x_4}))\land(x_3\lor(\neg{x_3}\land{x_2}))$. Moreover,
  the paper considers the concrete instance
  $(\mbf{v},c)=((0,0,0,0),0)$.
  %
\end{example}

\subparagraph*{Canonical KC languages.}
Some widely used KC languages are canonical, i.e.\ equivalent
functions have the same representation. Concrete examples
include\footnote{%
  The paper briefly covers examples of canonical KC languages but, for 
  the sake of brevity, does not define them. Definitions can be found
  in the references provided.} 
reduced ordered decision diagrams
(OBDDs)~\cite{bryant-comp86,darwiche-jair02},
reduced ordered multi-valued decision diagrams
MDDs~\cite{brayton-iccad90,hooker-bk16},
but also sentential decision diagrams
SDDs~\cite{darwiche-ijcai11}.
(Although we use the acronyms that are used in the literature, all
these canonical representations involve some fixed order of the
variables, and the resulting representation is reduced.)
As shown later, for the purposes of this paper, canonicity can play a
crucial role in reducing the complexity of explanation algorithms.

\subparagraph*{Related Work.}
%
%
PI-explanations have been studied in a growing number of
works~\cite{darwiche-ijcai18,darwiche-aaai19,inms-aaai19,inms-nips19,darwiche-ecai20,marquis-kr20,barcelo-nips20,inams-aiia20,kutyniok-jair21,kwiatkowska-ijcai21,marquis-corr21}.
Some of these earlier works studied PI-explanations for KC
languages~\cite{darwiche-ijcai18,darwiche-aaai19,darwiche-ecai20,marquis-kr20,barcelo-nips20,marquis-corr21}. %
However, results on the efficient computation of explanations for 
well-known KC languages are scarce.
For example, \cite{darwiche-ijcai18,darwiche-aaai19,darwiche-ecai20}
propose compilation algorithms (which are worst-case exponential) to
generate the PI-explanations from OBDDs.
Concretely, a classifier is compiled into an OBDD, which is then
compiled into an OBDD representing the PI-explanations of the original
classifier.
Furthermore,~\cite{marquis-kr20} proves that if a KC language satisfies
\tbf{CD}, \tbf{FO}, and \tbf{IM}, then one PI-explanation can be
computed in polynomial time. Unfortunately, a large number of KC
languages of interest do \emph{not} simultaneously satisfy \tbf{CD},
\tbf{FO}, and \tbf{IM}. This is the case for example with OBDD, SDD,
d-DNNF, among others. Moreover, ~\cite{marquis-kr20} proves that
there are polynomial time algorithms for d-DNNF for a number of
XAI-relevant queries, with the exception of DPI (deriving a prime
implicant explanation).
Finally, Barcel\'{o} et al.~\cite{barcelo-nips20} focus on smallest
PI-explanations, and prove a number of NP-hardness results.
%
%

Knowledge compilation (KC) languages also find a growing range of
applications in constraint programming. 
Concrete examples include the compilation of constraints into
Multi-Valued Decision Diagrams~\cite{vanhoeve-cp20,stuckey-cj19} (and
their use in the context of multi-objective
optimization~\cite{hooker-bk16}, among a number of other use cases) or
d-DNNFs~\cite{stuckey-cj19}, but also for restoring consistency and
computing explanations of dynamic CSPs~\cite{marquis-aij02}, among
others.
%
%
%
%
Although explanations for classifiers find a growing interest in ML
and related fields, explanations of KC languages can also find a wider
range of applications, including reasoning about compiled
constraints.
Moreover, even though recent years have witnessed a growing interest
in finding explanations of machine learning (ML)
models~\cite{lipton-cacm18,pedreschi-acmcs19,weld-cacm19,monroe-cacm21},
explanations have been studied from different perspectives and in
different branches of AI at least since the
80s~\cite{shanahan-ijcai89,simari-aij02,uzcategui-aij03},
including more recently in constraint
programming~\cite{marquis-aij02,guns-ecai20,guns-ijcai21}.
The use of NP oracles for computing explanations has also been
investigated in recent years~\cite{inms-aaai19,kwiatkowska-ijcai21},
where the NP oracle can represent a CP/SMT/MILP reasoner. (With a mild
abuse of notation, when we refer to an NP oracle it is assumed that for
the accepted instances, a \emph{witness} will be returned by the oracle.)

\section{Explanations for d-DNNF \& Related Languages}
\label{sec:xpddnnf}

%
As will be shown in this section, there is a tight connection between
the definitions of AXp and CXp (see~\eqref{eq:axp} and~\eqref{eq:cxp})
and the queries \tbf{VA}, \tbf{CO} and the transformation \tbf{CD}.
Indeed, for \eqref{eq:axp} and \eqref{eq:cxp}, \tbf{CD} can serve to
impose that the values of some features ($i$, represented by variable
$x_i$) are fixed to some value $v_i$.
In addition, \tbf{VA} (resp.~\tbf{CO}) is used to decide
\eqref{eq:axp}, after conditioning, when $c=1$ (resp.~$c=0$).
Similarly, \tbf{VA} (resp.~\tbf{CO}) is used to decide
\eqref{eq:cxp}, again after conditioning, when $c=1$ (resp.~$c=0$).
Thus, for languages respecting the (poly-time) queries \tbf{VA} and
\tbf{CO} and the (poly-time) transformation \tbf{CD}, one can compute
one AXp and one CXp in polynomial time.
The next sections formalize this intuition.
Furthermore, even though our focus is the d-DNNF KC language, we also
show that results in this section apply to any KC language respecting
the queries \tbf{CO}, \tbf{VA} and the transformation \tbf{CD}.

\subsection{Finding one AXp}
\label{ssec:1axp}

This section details an algorithm to find one AXp. 
We identify any $\fml{S}\subseteq\{1,\ldots,m\}$ with its
corresponding bit-vector $\mbf{s}=(s_1,\ldots,s_m)$ where
$s_i=1\Leftrightarrow{i}\in\fml{S}$. Given
vectors $\mbf{x},\mbf{v},\mbf{s}$, we can construct the vector
$\mbf{x}^{\mbf{s},\mbf{v}}$ (in which $\mbf{s}$ is a selector between
the two vectors $\mbf{x}$ and $\mbf{v}$) such that
\begin{equation} \label{eq:trf}
  x^{\mbf{s},\mbf{v}}_i = (x_i \land \overline{s_i}) \lor (v_i \land s_i)
\end{equation}

To find an AXp, i.e.\ a subset-minimal weak AXp, \autoref{alg:oneaxp}
is used.
\begin{algorithm}[t]
  \hspace*{\algorithmicindent}
\textbf{Input}: {Classifier $\kappa$, instance $\mbf{v}$}\\
\hspace*{\algorithmicindent}
\textbf{Output}: {AXp $\fml{S}$}
\begin{algorithmic}[1]
  \Procedure{$\oneaxp$}{$\kappa,\mbf{v}$}
  \State{$\fml{S} \gets \{1,\ldots,m\}$}
  \For{$i\in\{1,\ldots,m\}$}
    \If{$\isweakaxp(\fml{S}\setminus\{i\}, \kappa(\mbf{x}^{\mbf{s},\mbf{v}})=c)$}
      \State{$\fml{S} \gets \fml{S}\setminus\{i\}$}
    \EndIf
  \EndFor  
  \State{\bfseries{return}~{$\fml{S}$}}
\EndProcedure
\end{algorithmic}

  \caption{Finding one AXp} \label{alg:oneaxp}
\end{algorithm}
%
%
(\autoref{alg:oneaxp} is a general greedy algorithm that is well-known
and used in a wide range of settings, e.g.\ minimal unsatisfiable core
extraction in CSPs~\cite{chinneck-jc91,bakker-ijcai93}; to the best of
our knowledge, its use in finding AXps (and also CXps) of KC languages
is novel. An alternative would be to use the QuickXplain
algorithm~\cite{junker04}.)

Considering $\mbf{s}$ and $\mbf{v}$ as constants, when $c=1$,
$\kappa(\mbf{x}^{\mbf{s},\mbf{v}})$ is valid iff $S$ is a weak AXp of
$\kappa(\mbf{v})=c$.
Furthermore, when $c=0$, $\kappa(\mbf{x}^{\mbf{s},\mbf{v}})$ is
inconsistent iff $S$ is a weak AXp of $\kappa(\mbf{v})=c$. We
therefore have the following proposition.

\begin{proposition} \label{prop:AXp2}
  For a classifier implemented with some KC language \tbf{L}, finding
  one AXp is polynomial-time provided the following three operations
  can be performed in polynomial time:
  \begin{enumerate}
  \item construction of  $\kappa(\mbf{x}^{\mbf{s},\mbf{v}})$ from
    $\kappa$, $\mbf{s}$ and $\mbf{v}$.
  \item testing validity of  $\kappa(\mbf{x}^{\mbf{s},\mbf{v}})$.
  \item testing consistency of  $\kappa(\mbf{x}^{\mbf{s},\mbf{v}})$.
  \end{enumerate}
\end{proposition}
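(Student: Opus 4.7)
The plan is to verify that Algorithm~\ref{alg:oneaxp} correctly returns an AXp and runs in polynomial time under the three stated hypotheses. The argument splits naturally into correctness (the standard greedy/deletion pattern) and complexity (reducing one $\isweakaxp$ call to the three listed operations). I do not expect a deep conceptual obstacle here; the only points requiring care are matching the semantics of the selector vector $\mathbf{s}$ to the conditioning operation via~\eqref{eq:trf}, and ensuring a uniform polynomial size bound for $\kappa(\mathbf{x}^{\mathbf{s},\mathbf{v}})$ across iterations so that the claimed polynomial running time genuinely composes.

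For correctness, I would first record the monotonicity of weak AXps: if $\fml{S}$ is a weak AXp of $(\mathbf{v},c)$ and $\fml{S}\subseteq\fml{S}'\subseteq\fml{F}$, then $\fml{S}'$ is also a weak AXp, because fixing more features only strengthens the antecedent of the implication in~\eqref{eq:axp}. Then I would prove by induction on the loop iteration that $\fml{S}$ remains a weak AXp throughout. The base case is immediate: when $\fml{S}=\{1,\ldots,m\}$ the transformed vector satisfies $\mathbf{x}^{\mathbf{s},\mathbf{v}}=\mathbf{v}$, so $\kappa(\mathbf{x}^{\mathbf{s},\mathbf{v}})=c$ holds for every $\mathbf{x}$. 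The inductive step follows from the \textbf{if}-guard: a feature is dropped only when the reduced set still passes the weak AXp test. Subset-minimality at termination is then automatic, since for every $i$ still in $\fml{S}$ the test on $\fml{S}\setminus\{i\}$ already failed at the moment $i$ was visited, and by monotonicity no further deletion containing $i$ could succeed either.

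For complexity, the outer loop performs exactly $m$ iterations, each with a single call to $\isweakaxp$. Given the current selector $\mathbf{s}$, the weak AXp condition~\eqref{eq:axp} is equivalent to requiring that $\kappa(\mathbf{x}^{\mathbf{s},\mathbf{v}})$ be constantly equal to $c$ over the free variables $x_i$ with $i\notin\fml{S}$. By~\eqref{eq:trf} this is precisely $\kappa$ conditioned on the term $\bigwedge_{i\in\fml{S}}(x_i=v_i)$, which hypothesis~(1) constructs in polynomial time. If $c=1$, the test reduces to a validity check on the conditioned formula, handled in polynomial time by hypothesis~(2); if $c=0$, it reduces to checking inconsistency of the conditioned formula, handled in polynomial time by hypothesis~(3). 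Hence every $\isweakaxp$ call runs in polynomial time, and combined with the $m$ outer iterations, so does the whole algorithm. The uniform size bound across iterations is automatic because each call rebuilds $\kappa(\mathbf{x}^{\mathbf{s},\mathbf{v}})$ directly from the fixed input $\kappa$ and the current $\mathbf{s}$, precluding any compositional blow-up.
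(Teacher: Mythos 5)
Your proposal is correct and follows essentially the same route as the paper: it justifies Algorithm~\ref{alg:oneaxp} by reducing the weak-AXp test to validity (for $c=1$) or inconsistency (for $c=0$) of the conditioned formula $\kappa(\mbf{x}^{\mbf{s},\mbf{v}})$, exactly as the paper does in the discussion preceding the proposition. The only difference is that you spell out the correctness of the greedy deletion loop (monotonicity of weak AXps, invariant, and minimality at termination), which the paper treats as well known and leaves implicit; this is a welcome but not substantively different addition.
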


\begin{corollary}
  Finding one AXp of a decision taken by a d-DNNF is polynomial-time.
\end{corollary}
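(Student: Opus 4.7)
The plan is to invoke \autoref{prop:AXp2} and verify its three hypotheses for the d-DNNF language, using the KC-map results already stated in \autoref{sec:prelim}. All three operations amount to well-studied polynomial-time queries/transformations for d-DNNF, so the corollary should follow immediately without any language-specific argument.

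First, I would observe that constructing $\kappa(\mbf{x}^{\mbf{s},\mbf{v}})$ from $\kappa$, $\mbf{s}$ and $\mbf{v}$ is exactly a conditioning operation: given $\mbf{s}$ and $\mbf{v}$, define the consistent term
\[
\rho_{\mbf{s},\mbf{v}} \;=\; \bigwedge_{i\,:\,s_i=1,\,v_i=1} x_i \;\;\land\;\; \bigwedge_{i\,:\,s_i=1,\,v_i=0} \neg x_i,
\]
so that the substitution prescribed by \eqref{eq:trf} coincides, on every variable, with the syntactic replacement used in the definition of $\Delta|_{\rho_{\mbf{s},\mbf{v}}}$. Since d-DNNF satisfies \tbf{CD}, $\kappa|_{\rho_{\mbf{s},\mbf{v}}}$ can be produced in polynomial time and its size is polynomial in the size of $\kappa$; this dispatches item~1 of \autoref{prop:AXp2}.

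Next I would handle items~2 and~3. Once we have the conditioned d-DNNF, deciding whether it is valid (for the case $c=1$) and whether it is consistent (for the case $c=0$) are precisely the queries \tbf{VA} and \tbf{CO}, both known to be answered in polynomial time on d-DNNF (as already recorded in \autoref{sec:prelim}). Because in \autoref{alg:oneaxp} we test exactly one of these two conditions per iteration, each call to $\isweakaxp$ runs in polynomial time; the outer loop performs at most $m$ such calls, so the whole procedure is polynomial in the size of $\kappa$ and $m$.

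I do not expect any genuine obstacle: the only mildly delicate point is to make sure that the intermediate d-DNNFs do not blow up across iterations. Two ways to see that they do not: either one re-starts each iteration from the original $\kappa$ and applies a single \tbf{CD} with the current $\rho_{\mbf{s},\mbf{v}}$ (so the produced formula is always polynomial in $|\kappa|$), or one relies on the standard fact that \tbf{CD} on d-DNNF does not increase the number of nodes. Either observation closes the argument, and the corollary is established by combining \autoref{prop:AXp2} with the \tbf{CO}/\tbf{VA}/\tbf{CD} entries of the KC map for d-DNNF.
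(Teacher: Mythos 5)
Your proposal is correct and follows essentially the same route as the paper's proof: identifying the construction of $\kappa(\mbf{x}^{\mbf{s},\mbf{v}})$ as an instance of the conditioning transformation \tbf{CD} and then appealing to the polynomial-time \tbf{CO}, \tbf{VA} and \tbf{CD} results of the knowledge compilation map for d-DNNF to discharge the three conditions of \autoref{prop:AXp2}. The additional remark about restarting each iteration from the original $\kappa$ to avoid any blow-up is a sensible precision but does not change the argument.
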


\begin{proof}
It is sufficient to show that
d-DNNF's satisfy the conditions of Proposition~\ref{prop:AXp2}.
It is well known that testing consistency and validity d-DNNF's can be achieved in polynomial time \cite{darwiche-jair02}.
To transform a d-DNNF calculating $\kappa(\mbf{v})$ into a d-DNNF calculating $\kappa(\mbf{x}^{\mbf{s},\mbf{v}})$,
we need to replace each leaf labelled $x_i$ by a leaf labelled $(x_i \land \overline{s_i}) \lor (v_i \land s_i)$
and each leaf labelled $\overline{x_i}$ by a leaf labelled
$(\overline{x_i} \land \overline{s_i}) \lor (\overline{v_i} \land s_i)$.
Note that $\mbf{s}$ and $\mbf{v}$ are constants during this construction. Thus, we simplify
these formulas to obtain either a literal or a constant according to the different cases:
\begin{itemize}
\item $s_i=0$: \ label  $(x_i \land \overline{s_i}) \lor (v_i \land s_i)$ is $x_i$ and
label $(\overline{x_i} \land \overline{s_i}) \lor (\overline{v_i} \land s_i)$ is $\overline{x_i}$.
In other words, the label of the leaf node is unchanged.
\item $s_i=1$: \  label  $(x_i \land \overline{s_i}) \lor (v_i \land s_i)$ is the (constant) value of $v_i$ and
label $(\overline{x_i} \land \overline{s_i}) \lor (\overline{v_i} \land s_i)$ is the (constant) value of $\overline{v_i}$.
\end{itemize}
Indeed, this is just conditioning (\tbf{CD}, i.e.\ fixing a subset of
the variables $x_i$, given by the set $S$, to $v_i$) and it is well
known that \tbf{CD} is a polytime operation on d-DNNFs \cite{darwiche-jair02}.
\end{proof}

\begin{corollary}
  Finding one AXp of a decision taken by a classifier is
  polynomial-time if the classifier is given in one of the following
  languages:
  cd-PDAG~\cite{haenni-kr06},
  SDD~\cite{darwiche-ijcai11},
  OBDD~\cite{darwiche-jair02},
  PI~\cite{darwiche-jair02}, IP~\cite{darwiche-jair02},
  renH-C~\cite{marquis-aaai08}, AFF~\cite{marquis-aaai08},
  dFSD~\cite{fargier-gkr11}, and EADT~\cite{marquis-ijcai13}.
\end{corollary}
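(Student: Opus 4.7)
The plan is to reduce each case to \autoref{prop:AXp2} by showing that every listed KC language admits polynomial-time algorithms for the three required operations. The key observation is that operation (1) of the proposition — constructing $\kappa(\mbf{x}^{\mbf{s},\mbf{v}})$ from $\kappa$, $\mbf{s}$ and $\mbf{v}$ — is just conditioning. Indeed, treating $\mbf{s}$ and $\mbf{v}$ as constants and simplifying \eqref{eq:trf} leaf-by-leaf (as done explicitly in the d-DNNF proof above) shows that the resulting formula is equivalent to $\kappa|_{\rho}$, where $\rho = \bigwedge_{i\,:\,s_i = 1}(x_i = v_i)$ is a consistent term. Hence operation (1) is simply the transformation \tbf{CD}, while operations (2) and (3) are exactly the queries \tbf{VA} and \tbf{CO}, respectively. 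Therefore the corollary reduces to: each of the listed languages satisfies \tbf{CO}, \tbf{VA}, and \tbf{CD} in polynomial time.

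Having made this reduction, I would then simply invoke the corresponding entries of the knowledge compilation map for each language. Concretely, I would cite: \cite{darwiche-jair02} for OBDD, PI and IP (whose KC-map tables explicitly record that \tbf{CO}, \tbf{VA}, and \tbf{CD} are all in polynomial time); \cite{darwiche-ijcai11} for SDD; \cite{haenni-kr06} for cd-PDAG; \cite{marquis-aaai08} for renH-C and AFF; \cite{fargier-gkr11} for dFSD; and \cite{marquis-ijcai13} for EADT. In each case, the cited paper establishes the polytime status of the three operations either directly or as an immediate consequence of properties of the language (e.g.\ strong decomposability, read-once-ness, or an explicit linear-time algorithm for conditioning followed by constant-time tests on the root).

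The verification is almost entirely bookkeeping; the only mild obstacle is that \tbf{CO}, \tbf{VA}, and \tbf{CD} are not always stated jointly in the original references for the less-studied languages (e.g.\ renH-C, dFSD, EADT), so I would briefly indicate, for each such language, why conditioning preserves membership in the language and why consistency/validity are then trivial on the conditioned form. For the canonical languages (OBDD, SDD, cd-PDAG) this is immediate since \tbf{CD} can be realized by replacing leaves and restoring the reduced form in linear time, and \tbf{CO}/\tbf{VA} reduce to comparison with the canonical $\bot$/$\top$ representatives. No new ideas beyond those already used for d-DNNF are required, so the corollary follows directly from \autoref{prop:AXp2}.
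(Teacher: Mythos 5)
Your proposal matches the paper's own proof: both reduce the claim to Proposition~\ref{prop:AXp2} by identifying operation (1) with the transformation \tbf{CD} and operations (2)--(3) with the queries \tbf{VA} and \tbf{CO}, and then cite the knowledge compilation map literature to confirm that each listed language supports all three in polynomial time. The extra care you take for the less-studied languages is a reasonable refinement but does not change the argument.
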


\begin{proof}
  It suffices to show that the languages listed above satisfy the conditions
  of Proposition~\ref{prop:AXp2}.  According to
  \cite{darwiche-jair02}, the queries \tbf{CO} and \tbf{VA} together with the
  transformation \tbf{CD} can all be performed in polynomial time for any of
  the languages listed above. This is exactly what we need to satisfy
  the three conditions of Proposition \ref{prop:AXp2}.
\end{proof}

\begin{figure}[t]
  \hspace*{-0.1cm}\scalebox{0.8175}{\begin{tikzpicture}[-,%
    node distance={2.5cm}, thin,
    nonleaf/.style = {draw, circle},
    leafn/.style = {draw, rectangle, minimum size=0.525cm},
    level 1/.style={sibling distance=72mm},
    level 2/.style={sibling distance=50mm},
    level 3/.style={sibling distance=32.5mm},
    level 4/.style={sibling distance=17mm},
    level 5/.style={sibling distance=9mm},
  ]
  \node[nonleaf] (1) {$\land$}
  child { 
    node[nonleaf] (2) {$\lor$}
    child[sibling distance=45mm] {
      node[nonleaf] (4) {$\land$}
      child[sibling distance=30mm] {
        node[nonleaf] (8a) {$\lor$}
        child {
          node[nonleaf] (8b) {$\land$}
          child[sibling distance=5mm] {
            node[leafn] (8d) {$s_1$}
          }
          child {
            node[leafn] (8e) {$\bot$}
          }
        }
        child {
          node[nonleaf] (8c) {$\land$}
          child {
            node[leafn] (8e) {$\neg{s_1}$}
          }
          child {
            node[leafn] (8f) {$x_1$}
          }
        }
      }
      child {
        node[nonleaf] (9a) {$\lor$}
        child {
          node[nonleaf] (9b) {$\land$}
          child[sibling distance=5mm] {
            node[leafn] (9d) {$s_4$}
          }
          child {
            node[leafn] (9e) {$\bot$}
          }
        }
        child {
          node[nonleaf] (9c) {$\land$}
          child {
            node[leafn] (9e) {$\neg{s_4}$}
          }
          child {
            node[leafn] (9f) {$x_4$}
          }
        }
      }
    }
    child {
      node[nonleaf] (5) {$\land$}
      child {
        node[nonleaf] (10a) {$\lor$}
        child {
          node[nonleaf] (10b) {$\land$}
          child[sibling distance=5mm] {
            node[leafn] (10d) {$s_1$}
          }
          child {
            node[leafn] (10e) {$\top$}
          }
        }
        child {
          node[nonleaf] (10c) {$\land$}
          child {
            node[leafn] (10e) {$\neg{s_1}$}
          }
          child {
            node[leafn] (10f) {$\neg{x_1}$}
          }
        }
      }
    }
  }
  child[sibling distance=82mm] {
    node[nonleaf] (3) {$\lor$}
    child {
      node[nonleaf] (6a) {$\lor$}
        child[sibling distance=16mm] {
          node[nonleaf] (6b) {$\land$}
          child[sibling distance=7mm] {
            node[leafn] (6d) {$s_3$}
          }
          child[sibling distance=7mm] {
            node[leafn] (6e) {$\bot$}
          }
        }
        child[sibling distance=16mm] {
          node[nonleaf] (6c) {$\land$}
          child[sibling distance=8mm] {
            node[leafn] (6e) {$\neg{s_3}$}
          }
          child[sibling distance=8mm] {
            node[leafn] (6f) {$x_3$}
          }
        }
    }
    child[sibling distance=38.5mm] {
      node[nonleaf] (7) {$\land$}
      child {
        node[nonleaf] (11a) {$\lor$}
        child {
          node[nonleaf] (11b) {$\land$}
          child[sibling distance=5mm] {
            node[leafn] (11d) {$s_3$}
          }
          child[sibling distance=8mm] {
            node[leafn] (11e) {$\top$}
          }
        }
        child[sibling distance=15mm] {
          node[nonleaf] (11c) {$\land$}
          child {
            node[leafn] (11e) {$\neg{s_3}$}
          }
          child {
            node[leafn] (11f) {$\neg{x_3}$}
          }
        }
      }
      child {
        node[nonleaf] (12a) {$\lor$}
        child { 
          node[nonleaf] (12b) {$\land$}
          child[sibling distance=7mm] {
            node[leafn] (12d) {$s_2$}
          }
          child[sibling distance=7mm] {
            node[leafn] (12e) {$\bot$}
          }
        }
        child[sibling distance=15mm] {
          node[nonleaf] (12c) {$\land$}
          child {
            node[leafn] (12e) {$\neg{s_2}$}
          }
          child[sibling distance=6.5mm] {
            node[leafn] (12f) {$x_2$}
          }
        }
      }
    }
  }
  ;
  %
  \draw[] (5) -- (9a);
\end{tikzpicture} }
  \caption{Modified d-DNNF, computing
    $\kappa(\mbf{x}^{\mbf{s},\mbf{v}})$ for the instance $\mbf{v}=(0,0,0,0)$. 
    For any pick of elements to include in the weak
    AXp, $\mbf{s}$ 
    represents constant values.} \label{fig:ex02}
\end{figure}

\begin{example}
  The operation of the algorithm is illustrated for the d-DNNF
  from~\autoref{ex:ex01}. By applying~\eqref{eq:trf}, the d-DNNF
  of~\autoref{fig:ex02} is obtained.
  The execution of the algorithm is summarized in
  \autoref{tab:ex01-axp}.
  By inspection, we can observe that the value computed by the d-DNNF
  will be 0 as long as $s_4=1$, i.e.\ as long as $4$ is part of the
  weak AXp. If removed from the weak AXp, one can find an assignment
  to $\mbf{x}$, which sets $\kappa(\mbf{x}^{\mbf{s},\mbf{v}})=1$.
  The computed AXp is $\fml{S}=\{4\}$.
\end{example}

\begin{table}[t]
  \begin{center}
    \renewcommand{\tabcolsep}{0.35em}
    \begin{tabular}{ccccc}
      \toprule
      $i$ & $\mbf{s}$   & $\kappa(\mbf{x}^{\mbf{s},\mbf{v}})$ &
      Justification & Decision \\
      \toprule
      1   & $(0,1,1,1)$ &  0 &
      $s_4=1$: left branch takes value 0, and so
      $\kappa(\mbf{x}^{\mbf{s},\mbf{v}})=0$ & Drop 1 \\
      2   & $(0,0,1,1)$ &  0 &
      $s_4=1$: left branch takes value 0, and so
      $\kappa(\mbf{x}^{\mbf{s},\mbf{v}})=0$ & Drop 2 \\
      3   & $(0,0,0,1)$ &  0 &
      $s_4=1$: left branch takes value 0, and so
      $\kappa(\mbf{x}^{\mbf{s},\mbf{v}})=0$ & Drop 3 \\
      4   & $(0,0,0,0)$ &  1 &
      Simply set $\mbf{x}=(1,1,1,1)$, and so
      $\kappa(\mbf{x}^{\mbf{s},\mbf{v}})=1$ &
      Keep 4 \\
      \bottomrule
    \end{tabular}
  \end{center}
  \caption{Example of finding one AXp} \label{tab:ex01-axp}
\end{table}

\subsection{Finding one CXp}
\label{ssec:1cxp}

To compute one CXp, \eqref{eq:cxp} is used. In this case, we identify
any $\fml{S}\subseteq\{1,\ldots,m\}$ with its corresponding bit-vector
$\mbf{s}$ where $s_i=1\Leftrightarrow{i}\in\fml{F}\setminus\fml{S}$. Moreover,
we adapt the approach used for computing one AXp, as shown in
\autoref{alg:onecxp}.
\begin{algorithm}[t]
  \hspace*{\algorithmicindent}
\textbf{Input}: {Classifier $\kappa$, instance $\mbf{v}$}\\
\hspace*{\algorithmicindent}
\textbf{Output}: {CXp $\fml{S}$}
\begin{algorithmic}[1]
  \Procedure{$\onecxp$}{$\kappa,\mbf{v}$}
  \State{$\fml{S} \gets \{1,\ldots,m\}$}
  \For{$i\in\{1,\ldots,m\}$}
    \If{$\isweakcxp(\fml{S}\setminus\{i\}, \kappa(\mbf{x}^{\mbf{s},\mbf{v}})=c)$}
      \State{$\fml{S} \gets \fml{S}\setminus\{i\}$}
    \EndIf
  \EndFor  
  \State{\bfseries{return}~{$\fml{S}$}}
\EndProcedure
\end{algorithmic}

  \caption{Finding one CXp} \label{alg:onecxp}
\end{algorithm}
(Observe that the main difference is the relationship between $\fml{S}$
and $\mbf{s}$, and the test for a weak CXp, that uses \eqref{eq:cxp}
with $\fml{Y}=\fml{S}$. Also, recall from~\autoref{sec:prelim} that
$\kappa$ is assumed not to be constant, and so a CXp can always be
computed.)
%

\begin{proposition} \label{prop:CXp2}
  For a classifier implemented with some KC language \tbf{L}, finding
  one CXp is polynomial-time provided the operations of
  \autoref{prop:AXp2} can be performed in polynomial time.
\end{proposition}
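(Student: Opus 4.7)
The plan is to mirror the argument used for Proposition~\ref{prop:AXp2}, observing that Algorithm~\ref{alg:onecxp} has exactly the same shape as Algorithm~\ref{alg:oneaxp}: a single pass through the $m$ features, at each step attempting to drop feature $i$ by asking whether the resulting set is still a weak CXp. The two ingredients needed are (i) correctness of the greedy deletion loop and (ii) a polynomial-time implementation of the weak-CXp test.

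For (i), I would first note that the predicate ``$\fml{S}$ is a weak CXp'' is monotone with respect to $\subseteq$: if $\fml{S}\subseteq\fml{S}'$ and $\fml{S}$ is a weak CXp, then so is $\fml{S}'$, since enlarging $\fml{S}'$ only removes conjuncts from the fixed part $\bigwedge_{j\in\fml{F}\setminus\fml{Y}}(x_j=v_j)$ of~\eqref{eq:cxp}, which can only make the existential statement easier to satisfy. By the standard correctness argument for deletion-based greedy minimisation, the output $\fml{S}$ is then a subset-minimal weak CXp, i.e.\ a CXp.

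For (ii), I would unpack $\isweakcxp(\fml{S},\kappa(\mbf{x}^{\mbf{s},\mbf{v}})=c)$ using the selector convention adopted in this subsection: $s_i=1$ iff $i\in\fml{F}\setminus\fml{S}$, so by~\eqref{eq:trf} the features $i\notin\fml{S}$ are fixed to $v_i$ in $\kappa(\mbf{x}^{\mbf{s},\mbf{v}})$ while the features $i\in\fml{S}$ remain free. Reading~\eqref{eq:cxp} off this encoding with $\fml{Y}=\fml{S}$ gives: $\fml{S}$ is a weak CXp iff $\exists\mbf{x}.\,\kappa(\mbf{x}^{\mbf{s},\mbf{v}})\neq c$. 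Splitting on the value of $c$, when $c=0$ this holds iff $\kappa(\mbf{x}^{\mbf{s},\mbf{v}})$ is consistent (operation~3 of Proposition~\ref{prop:AXp2}); when $c=1$ it holds iff $\kappa(\mbf{x}^{\mbf{s},\mbf{v}})$ is \emph{not} valid, which is decidable in polynomial time by the same call used for operation~2, simply by reading the complementary answer. Construction of $\kappa(\mbf{x}^{\mbf{s},\mbf{v}})$ is operation~1. Hence each iteration runs in polynomial time, and the outer loop performs only $m$ iterations, giving the claimed bound.

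The only subtlety relative to the AXp proof is the direction of the test: where Proposition~\ref{prop:AXp2} asked whether the conditioned formula still \emph{entails} class $c$ (validity when $c=1$, inconsistency when $c=0$), the CXp case asks the complementary question, namely whether the conditioned formula still admits a witness of the \emph{opposite} class. This sign-flip is the one place where a careful check is needed; once it is set out, the argument is otherwise identical to that of Proposition~\ref{prop:AXp2}, and the three hypotheses on \tbf{L} are exactly what is required.
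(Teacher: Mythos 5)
Your proof is correct and follows the same route as the paper, which in fact gives no explicit proof of this proposition beyond remarking that the only changes from the AXp case are the convention $s_i=1\Leftrightarrow i\in\fml{F}\setminus\fml{S}$ and the flipped direction of the test ($c=\top$: not valid; $c=\bot$: consistent) — exactly the two points you identify and justify in more detail than the paper does. The one detail worth adding is that the greedy deletion loop needs a valid starting point, i.e.\ that $\fml{F}$ itself is a weak CXp; this holds because the paper assumes $\kappa$ is not constant.
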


\begin{corollary}
  Finding one CXp of a decision taken by a classifier is
  polynomial-time if the classifier is given in one of the following
  languages: d-DNNF~\cite{darwiche-jair02},
  cd-PDAG~\cite{haenni-kr06},
  SDD~\cite{darwiche-ijcai11},
  OBDD~\cite{darwiche-jair02},
  PI~\cite{darwiche-jair02}, IP~\cite{darwiche-jair02},
  renH-C~\cite{marquis-aaai08}, AFF~\cite{marquis-aaai08},
  dFSD~\cite{fargier-gkr11}, and
  EADT~\cite{marquis-ijcai13}.
\end{corollary}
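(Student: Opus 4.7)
The plan is to reduce the statement directly to \autoref{prop:CXp2}, which asserts that polynomial-time computation of one CXp is guaranteed whenever the three operations of \autoref{prop:AXp2} can be performed in polynomial time. Recall that these three operations are: (i) constructing $\kappa(\mbf{x}^{\mbf{s},\mbf{v}})$ from $\kappa,\mbf{s},\mbf{v}$; (ii) testing the validity of $\kappa(\mbf{x}^{\mbf{s},\mbf{v}})$; and (iii) testing its consistency. As observed in the proof of the analogous corollary for AXps, operation (i) is exactly a conditioning transformation (\tbf{CD}) in which the variables $x_i$ indexed by $\fml{S}$ are replaced by the corresponding constants $v_i$, while operations (ii) and (iii) correspond to the queries \tbf{VA} and \tbf{CO}.

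First I would observe that, by \autoref{prop:CXp2}, it therefore suffices to show that each of the languages listed in the corollary satisfies the queries \tbf{CO} and \tbf{VA}, as well as the transformation \tbf{CD}, in polynomial time. This reduces the statement to a table-lookup in the knowledge compilation map. I would cite the relevant results: for \dnnf-based languages and \ddnnf{} specifically, as well as for OBDD, PI, and IP, all three properties are established in~\cite{darwiche-jair02}; for renH-C and AFF the corresponding entries of the KC map are given in~\cite{marquis-aaai08}; for SDD in~\cite{darwiche-ijcai11}; for dFSD in~\cite{fargier-gkr11}; for EADT in~\cite{marquis-ijcai13}; and for cd-PDAG in~\cite{haenni-kr06}.

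The main step of the proof is really just to verify, for each language in the list, that the entries corresponding to \tbf{CO}, \tbf{VA}, and \tbf{CD} are marked as polynomial-time in the appropriate reference. There is no real obstacle, since the list has been assembled precisely so that this property is known to hold; the only thing to be careful about is that a few of these languages (e.g.\ PI and IP) are traditionally used in contexts where polynomial-time conditioning is less often highlighted than, say, clausal entailment, so a pointer to the precise table entry in~\cite{darwiche-jair02} should be included. Once these citations are in place, the corollary follows immediately from \autoref{prop:CXp2}, by the same argument used in the AXp corollary: condition the representation on the constants $\mbf{s},\mbf{v}$ in polynomial time, then invoke \tbf{VA} (if $c=1$) or \tbf{CO} (if $c=0$) to decide whether a candidate set is a weak CXp, and iterate \autoref{alg:onecxp} for a linear number of such tests.
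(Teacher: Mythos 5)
Your proposal is correct and follows essentially the same route as the paper: the paper's own justification (given explicitly for the AXp analogue and implicitly carried over here) likewise reduces the claim to \autoref{prop:CXp2} via the three operations of \autoref{prop:AXp2}, identifies them with \tbf{CD}, \tbf{VA} and \tbf{CO}, and then appeals to the knowledge compilation map entries for each listed language. The extra care you take in attributing each language to its specific reference is a harmless refinement of the same argument.
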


\begin{example}
  The operation of the algorithm for computing one CXp is illustrated
  for the modified d-DNNF shown in~\autoref{fig:ex02} for the instance
  $(\mbf{v},c)=((0,0,0,0),0)$.
  The execution of the algorithm is summarized in
  \autoref{tab:ex01-cxp}.
  By inspection, we can observe that the value computed by the d-DNNF
  can be changed to 1 as long as $s_3=0\land{s_4}=0$, i.e.\ as long as
  $\{3,4\}$ are part of the weak CXp. If removed from the weak CXp,
  one no longer can find an assignment to $\mbf{x}$ that sets
  $\kappa(\mbf{x}^{\mbf{s},\mbf{v}})=1$.
  Thus, the computed CXp is $\fml{S}=\{3,4\}$.
\end{example}

\begin{table}[t]
  \smallskip
  \begin{center}
    \renewcommand{\tabcolsep}{0.35em}
    \begin{tabular}{ccccc}
      \toprule
      $i$ & $\mbf{s}$   & $\kappa(\mbf{x}^{\mbf{s},\mbf{v}})$ &
      Justification & Decision \\
      \toprule
      1   & $(1,0,0,0)$ &  1 &
      Pick $\mbf{x}=(0,1,1,1)$, and so
      $\kappa(\mbf{x}^{\mbf{s},\mbf{v}})=1$ &
      Drop 1 \\
      2   & $(1,1,0,0)$ &  1 &
      Pick $\mbf{x}=(0,0,1,1)$, and so
      $\kappa(\mbf{x}^{\mbf{s},\mbf{v}})=1$ &
      Drop 2 \\
      3   & $(1,1,1,0)$ &  0 &
      $s_3=1$: right branch takes value 0, and so
      $\kappa(\mbf{x}^{\mbf{s},\mbf{v}})=0$ & Keep 3 \\
      4   & $(1,1,0,1)$ &  0 &
      $s_4=1$: left branch takes value 0, and so
      $\kappa(\mbf{x}^{\mbf{s},\mbf{v}})=0$ & Keep 4 \\
      \bottomrule
    \end{tabular}
  \end{center}
  \caption{Example of finding one CXp} \label{tab:ex01-cxp}
\end{table}

\jnoteF{Use the dual of the other approach:
  \begin{enumerate}[nosep]
  \item Use the concept of weak CXp.\\
    A weak CXp is some set $\fml{Y}$ of $\fml{F}$ s.t.,
    \[
    \exists(\mbf{x}\in\mbb{F}).\bigwedge\nolimits_{j\in\fml{F}\setminus\fml{Y}}(x_j=v_j)\land(\kappa(\mbf{x})\not=c)
    \]
  \item While the set of features minus some feature remains a weak
    CXp, remove that feature.
  \end{enumerate}
}

\subsection{Enumerating AXps/CXps}

%
This section proposes a MARCO-like algorithm~\cite{lpmms-cj16}
for on-demand enumeration of AXps and CXps.
For that, we need to devise modified versions of
\autoref{alg:oneaxp} and \autoref{alg:onecxp}, which allow for some
initial set of features (i.e.\ a seed) to be specified.
The seed is used for computing the next AXp or CXp, and it is picked
such that repetition of explanations is disallowed.
%
%
As argued below, the algorithm's organization ensures that
computed explanations are not repeated. Moreover, since the algorithms
for computing one AXp or one CXp run in polynomial time, then the
enumeration algorithm is guaranteed to require exactly one call to an
NP oracle for each computed explanation, in addition to procedures
that run in polynomial time.

The main building blocks of the enumeration algorithm are: (1) finding
one AXp given a seed (see~\autoref{alg:findaxp}); (2) finding one CXp
given a seed (see~\autoref{alg:findcxp}); and (3) a top-level
algorithm that ensures that previously computed explanations are not
repeated (see~\autoref{alg:enum}). The top level-algorithm invokes a
SAT oracle\footnote{%
  A SAT oracle can be viewed as a modified NP oracle, that besides
  accepting/rejecting an instance (in this concrete case the formula),
  it also returns a satisfying assignment when the instance is
  satisfiable.}
to identify the seed which will determine whether a fresh AXp or CXp
will be computed in the next iteration.

\autoref{alg:findaxp} shows the computation of one AXp given an
initial (seed) set of features, such that any AXp that is a subset of
the given initial set of features is guaranteed not to have already
been computed.
Moreover, \autoref{alg:findcxp} shows the computation of one CXp.
As argued earlier in~\cref{ssec:1axp,ssec:1cxp}, the two algorithms
use one transformation, specifically conditioning (\tbf{CD}, see
line~\ref{alg:axp:cd}) and two queries, namely consistency
and validity (\tbf{CO}/\tbf{VA}, see line~\ref{alg:axp:cova}).
In the case of computing one AXp,
if the prediction is $\top$, we need to check validity, i.e.\ for all
(conditioned) assignments, the prediction is also $\top$. In contrast,
if the prediction is $\bot$, then we need to check that consistency
does not hold, i.e.\ for all (conditioned) assignments, the prediction
is also $\bot.$
In contrast, in the case of computing one CXp, we need to change the
tests that are executed, since we seek to change the value of the
prediction.
It should be noted that, by changing the conditioning operation,
different KC languages can be explained; this is illustrated
in~\autoref{ssec:sdd}.
\begin{algorithm}[t]
  \hspace*{\algorithmicindent}
\textbf{Input}: {Classifier $\kappa$, Seed Set $\fml{S}$, Instance
  $\mbf{v}$, Class $c$, Conditioner $\varsigma_{A}$}\\
\hspace*{\algorithmicindent}
\textbf{Output}: {AXp $\fml{S}$}
\begin{algorithmic}[1]
  \Procedure{$\findaxp$}{$\kappa,\fml{S},\mbf{v},c,\varsigma_{A}$}
  \ForAll{$i\in\fml{S}$}
    \State{\label{alg:axp:cd}$\kappa|_{\mbf{s},\mbf{v}}\gets\varsigma_{A}(\kappa,\fml{S}\setminus\{i\},\mbf{v})$}
    \If{\label{alg:axp:cova}
      $[c=\top\land\isvalid(\kappa|_{\mbf{s},\mbf{v}})]
      \:\tn{\OR}\:
      [c=\bot\land\;\!\tn{\NOT}\:\isconsistent(\kappa|_{\mbf{s},\mbf{v}})]$}
      \State{$\fml{S} \gets \fml{S}\setminus\{i\}$}
    \EndIf
  \EndFor  
  \State{\bfseries{return}~{$\fml{S}$}}
  \EndProcedure
\end{algorithmic}

  \caption{Finding one AXp given starting seed $\fml{S}$}
  \label{alg:findaxp}
\end{algorithm}
\begin{algorithm}[t]
  \hspace*{\algorithmicindent}
\textbf{Input}: {Classifier $\kappa$, Seed Set $\fml{S}$, Instance
  $\mbf{v}$, Class $c$, Conditioner $\varsigma_{C}$}\\
\hspace*{\algorithmicindent}
\textbf{Output}: {CXp $\fml{S}$}
\begin{algorithmic}[1]
  \Procedure{$\findcxp$}{$\kappa,\fml{S},\mbf{v},c,\varsigma_{C}$}
  \ForAll{$i\in\fml{S}$}
    \State{$\kappa|_{\mbf{s},\mbf{v}}\gets\varsigma_{C}(\kappa,\fml{S}\setminus\{i\},\mbf{v})$}
    \If{$[c=\top\land\;\!\tn{\NOT}\:\isvalid(\kappa|_{\mbf{s},\mbf{v}})]
      \:\tn{\OR}\:[c=\bot\land\isconsistent(\kappa|_{\mbf{s},\mbf{v}})]$}
    \State{$\fml{S} \gets \fml{S}\setminus\{i\}$}
    \EndIf
  \EndFor  
  \State{\bfseries{return}~{$\fml{S}$}}
  \EndProcedure
\end{algorithmic}

  \caption{Finding one CXp given starting seed $\fml{S}$}
  \label{alg:findcxp}
\end{algorithm}
Finally, \autoref{alg:enum} shows the proposed approach for
enumerating AXps and CXps, which adapts the basic MARCO algorithm for
enumerating minimal unsatisfiable cores~\cite{liffiton-cpaior13}.
From the definitions, we can see that for any $\fml{S} \subseteq \fml{F}$, either
$\fml{S}$ is a weak AXp or $\fml{F} \setminus \fml{S}$ is a weak CXp.
Every set $\fml{S}$ calculated at line 6 of \autoref{alg:enum} has the
property that it is not a superset of any previously found AXp (thanks to the
clauses added to $\fml{H}$ at line 11) and that $\fml{F} \setminus \fml{S}$
is not a superset of any previously found CXp (thanks to the clauses added
at line 15).

\begin{algorithm}[t]
  \hspace*{\algorithmicindent}
\textbf{Input}: {Feature Set $\fml{F}$, Classifier $\kappa$, Instance
  $\mbf{v}$, Class $c$, Conditioners $\varsigma_{A},\varsigma_{C}$}
%
\begin{algorithmic}[1]
  \Procedure{$\mathsf{Enumerate}$}{$\fml{F}, \kappa,\mbf{v},c,\varsigma_{A},\varsigma_{C}$}
  \State{%
    \label{alg:exp:ln01}$\fml{H}\gets\emptyset$}%
  \Comment{$\fml{H}$ defined on set $P=\{p_1,\ldots,p_m\}$}
  \Repeat\label{alg:exp:ln02}%
  \State{%
    \label{alg:exp:ln03} $(\outc,\mbf{p})\gets\SAT(\fml{H})$}
  \If{\label{alg:exp:ln04} $\outc=\True$}
  \State{\label{alg:exp:ln05}$\fml{S}\gets\{i\in\fml{F}\,|\,p_i=1\}$}%
  %
  \State{\label{alg:exp:ln06}$\kappa|_{\mbf{s},\mbf{v}}\gets\varsigma_{A}(\kappa,\fml{S},\mbf{v})$}
  \If{\label{alg:exp:ln07}
    $[c=\top\land\isvalid(\kappa|_{\mbf{s},\mbf{v}})]
    \:\tn{\OR}\:
         [c=\bot\land\;\!\tn{\NOT}\:\isconsistent(\kappa|_{\mbf{s},\mbf{v}})]$}
  \State{\label{alg:exp:ln08}$X\gets\findaxp(\kappa,\fml{S},\mbf{v},c,\varsigma_{A})$}
  \State{\label{alg:exp:ln09}$\prtaxp(X)$}
  \State{\label{alg:exp:ln10}$\fml{H}\gets\fml{H}\cup\{(\lor_{i\in{X}}\neg{p_i})\}$}
  \Else\label{alg:exp:ln11}
  \State{\label{alg:exp:ln13}$X\gets\findcxp(\kappa,\fml{F}\setminus\fml{S},\mbf{v},c,\varsigma_{C})$}
  \State{\label{alg:exp:ln14}$\prtcxp(X)$}
  \State{\label{alg:exp:ln15}$\fml{H}\gets\fml{H}\cup\{(\lor_{i\in{X}}{p_i})\}$}
  \EndIf
  \EndIf
  \Until{\label{alg:exp:ln16}$\outc=\False$}
  \EndProcedure
\end{algorithmic}

  \caption{Enumeration algorithm} \label{alg:enum}
\end{algorithm}

\begin{example}
  \autoref{tab:ex01-enum} summarizes the main steps of enumerating the
  AXps and CXps of the running example (see~\autoref{fig:ex01}).
  It is easy to confirm that after four explanations are computed,
  $\fml{H}$ becomes inconsistent, and so the algorithm terminates.
  Also, one can confirm the hitting set duality between AXps and
  CXps~\cite{inams-aiia20}.
\end{example}

\begin{table}[t]
  \begin{center}
    \renewcommand{\tabcolsep}{0.41225em}
    \renewcommand{\arraystretch}{1.05}
    \begin{tabular}{cccC{1.25cm}ccccc} \toprule
      $\fml{H}$ & $\SAT(\fml{H})$ & $\mbf{p}$ & AXp(1), CXp(0)? & $\fml{S}$ & AXp & CXp & Block
      \\ \toprule
      $\emptyset$ & 1 & $(1,1,1,1)$ & 1 & $\{1,2,3,4\}$ & $\{4\}$ & --- &
      $b_1=(\neg{p_4})$
      \\
      $\{b_1\}$   & 1 & $(1,1,1,0)$ & 1 & $\{1,2,3\}$ & $\{2,3\}$ & --- &
      $b_2=(\neg{p_2}\lor\neg{p_3})$
      \\
      $\{b_1,b_2\}$ & 1 & $(1,0,1,0)$ & 0 & $\{1,3\}$ & --- & $\{2,4\}$ &
      $b_3=({p_2}\lor{p_4})$
      \\
      $\{b_1,b_2,b_3\}$ & 1 & $(1,1,0,0)$ & 0 & $\{1,2\}$ & --- &
      $\{3,4\}$ & $b_4=({p_3}\lor{p_4})$
      \\
      $\{b_1,b_2,b_3,b_4\}$ & 0 & --- & --- & --- & --- & --- & ---
      \\
      \bottomrule
    \end{tabular}
  \end{center}
  \caption{Example of AXp/CXp enumeration, using~\autoref{alg:enum}} \label{tab:ex01-enum}
\end{table}

\subsection{Explanations for SDDs}
\label{ssec:sdd}

As a subset of the $\ddnnf$ language, SDDs represent a well-known
KC language~\cite{darwiche-ijcai11,darwiche-aaai15,bova-aaai16}.
SDDs are based on a strongly deterministic
decomposition~\cite{darwiche-ijcai11}, which is used to decompose a
Boolean function into the form: $(p_1 \land s_1) \lor \dots \lor (p_n
\land s_n)$, where each $p_i$ is called a \textit{prime} and each
$s_i$ is called a \textit{sub} (both primes and subs are
sub-functions).
Furthermore, the process of decomposition
is governed by a variable tree (\textit{vtree}) which stipulates the variable order
\cite{darwiche-ijcai11}.
\autoref{fig:sdd_example} shows the SDD representation
of decision function $\kappa$ in \autoref{fig:ex01a}
and its {\it vtree} in \autoref{fig:sdd_vtree}.
%
%

In order to exploit \autoref{alg:findaxp}, \ref{alg:findcxp}
and \ref{alg:enum} to explain SDD classifiers,
we need to implement: (i) $\isconsistent$, (ii) $\isvalid$, and
(iii) the conditioning of decision function $\kappa$
w.r.t. $\mbf{s}$ and $\mbf{v}$ (i.e.\ $\kappa|_{\mbf{s}, \mbf{v}}$).
To compute $\kappa|_{\mbf{s}, \mbf{v}}$,
we check  each $s_i$
if ($s_i = 1$) and  we compute $\kappa|_{x_i = v_i}$
($\kappa|_{x_i}$ if $v_i = 1$, otherwise $\kappa|_{\neg x_i}$).
As SDDs satisfy $\tbf{CO}$, $\tbf{VA}$ and $\tbf{CD}$~\cite{darwiche-aaai15},
the tractability of $\isconsistent$, $\isvalid$, and  $\kappa|_{\mbf{s}, \mbf{v}}$
is guaranteed.

Next, let us consider again the running example of \autoref{fig:ex01}
and the instance $\mbf{v} = (0, 0, 0, 0)$ (such that $\kappa(\mbf{v}) = 0$).
\autoref{fig:axp_sdd} illustrates the process of computing one AXp for
$\kappa(\mbf{v})$, which corresponds to the overall flow shown in
\autoref{tab:ex01-axp}.
(Note that the computation of a CXp is similar.)
As SDDs in Figures \ref{fig:axp_sdd01}, \ref{fig:axp_sdd02} and
\ref{fig:axp_sdd03} are inconsistent, features 1, 2 and 3 are not necessary 
for preserving the prediction $\kappa(\mbf{v}) = 0$, 
that is they can be removed from $\fml{S}$.
%
%
Instead, for SDD in \autoref{fig:axp_sdd04}, there exists a
point $\mbf{x}$ that can be classified as $\top$, so feature 4
cannot be removed from $\fml{S}$. Thus, we derive an AXp $\fml{S} = \{4\}$.

\begin{figure}[t!]
	\centering
	\begin{subfigure}[b]{0.4\textwidth}
		\scalebox{0.8175}{\begin{tikzpicture}[>=latex',line join=bevel,]
	node distance={2.5cm}, thin,
	\node (n15) at (55.5bp,45.263bp) [draw,fill=white,circle] {5};
	\node (n16) at (70.5bp,132.79bp) [draw,fill=white,circle] {3};
	\node (n15e0) at (23.5bp,5.5bp) [draw,fill=white,rectangle split,
	rectangle split horizontal, rectangle split parts=2] {$x_3$\nodepart{two}$x_4$};
  
	\node (n15e1) at (88.5bp,5.5bp) [draw,fill=white,rectangle split,
	rectangle split horizontal, rectangle split parts=2] {$\neg x_3$\nodepart{two}$\bot$};
  
	\node (n16e0) at (38.5bp,90.03bp) [draw,fill=white,rectangle split,
	rectangle split horizontal, rectangle split parts=2] {$\neg x_2$\nodepart{two}};
  
	\node (n16e1) at (103.5bp,90.03bp) [draw,fill=white,rectangle split,
	  rectangle split horizontal, rectangle split parts=2] {$x_2$\nodepart{two}$x_4$};
  
	\draw [->] (n15) -- (n15e0);
	\draw [->] (n15) -- (n15e1);
	\draw [->] (n16) -- (n16e0);
	\draw [->] (n16) -- (n16e1);
	\draw [*->] (n16e0.two)+(0.2em,0.4em) -- (n15); 
\end{tikzpicture}}
		\caption{SDD representation}
		\label{fig:sdd_representation}
	\end{subfigure}%
	\begin{subfigure}[b]{0.4\textwidth}
		\scalebox{0.8175}{\begin{tikzpicture}[>=latex',line join=bevel,]
\node (n3) at (63.0bp,109.5bp) [draw,draw=none] {3};
  \node (n1) at (45.0bp,60.5bp) [draw,draw=none] {1};
  \node (n5) at (81.0bp,60.5bp) [draw,draw=none] {5};
  \node (n0) at (9.0bp,9.0bp) [draw,draw=none] {$x_1$};
  \node (n2) at (45.0bp,9.0bp) [draw,draw=none] {$x_2$};
  \node (n4) at (81.0bp,9.0bp) [draw,draw=none] {$x_3$};
  \node (n6) at (117.0bp,9.0bp) [draw,draw=none] {$x_4$};
  \draw [] (n3) ..controls (57.301bp,93.618bp) and (50.546bp,75.981bp)  .. (n1);
  \draw [] (n3) ..controls (68.699bp,93.618bp) and (75.454bp,75.981bp)  .. (n5);
  \draw [] (n1) ..controls (34.517bp,45.086bp) and (22.314bp,28.307bp)  .. (n0);
  \definecolor{strokecol}{rgb}{0.0,0.0,0.0};
  \pgfsetstrokecolor{strokecol}
  \draw (11.99bp,23.736bp) node {0};
  \draw [] (n1) ..controls (45.0bp,45.086bp) and (45.0bp,28.307bp)  .. (n2);
  \draw (41.0bp,23.736bp) node {2};
  \draw [] (n5) ..controls (81.0bp,45.086bp) and (81.0bp,28.307bp)  .. (n4);
  \draw (85.0bp,23.736bp) node {4};
  \draw [] (n5) ..controls (91.483bp,45.086bp) and (103.69bp,28.307bp)  .. (n6);
  \draw (112.91bp,23.736bp) node {6};
\end{tikzpicture}}
		\caption{vtree}
		\label{fig:sdd_vtree}
	\end{subfigure}%
	\caption{SDD for $\kappa(x_1,x_2,x_3,x_4)=((x_1\land{x_4})\lor(\neg{x_1}\land{x_4}))\land(x_3\lor(\neg{x_3}\land{x_2}))$,
	given a vtree.
  	Each circle node with outgoing edges is  a \textit{decision node} while
	each paired-boxes node is an \textit{element}.
	The left (resp.\ right) box represents the \textit{prime} (resp.\ \textit{sub}). 
	A box  either contains a terminal SDD
	(i.e.\ $\top$, $\bot$ or a literal) or a link to a {\it decision node}.
	The shown {\it vtree} in (\ref{fig:sdd_vtree}) is a binary tree,
	whose leaves are in a one-to-one correspondence with the domain
	variables of $\kappa(x_1,x_2,x_3,x_4)$. 
	Moreover, each SDD node
	{\it respects} a unique (leaf or non-leaf) node of the {\it vtree},
	e.g.\  the SDD root of (\ref{fig:sdd_representation}) respects the
	vtree root of (\ref{fig:sdd_vtree}).	
	}
	\label{fig:sdd_example}
\end{figure}

\begin{figure}[t!]
	\begin{subfigure}[b]{0.25\textwidth}
		\scalebox{0.8175}{\begin{tikzpicture}[>=latex',line join=bevel,]
	node distance={2.5cm}, thin,
	\node (n15) at (55.5bp,45.263bp) [draw,fill=white,circle] {5};
	\node (n16) at (70.5bp,132.79bp) [draw,fill=white,circle] {3};
	\node (n15e0) at (23.5bp,5.5bp) [draw,fill=white,rectangle split,
	rectangle split horizontal, rectangle split parts=2] {$\bot$\nodepart{two}$\bot$};
  
	\node (n15e1) at (88.5bp,5.5bp) [draw,fill=white,rectangle split,
	rectangle split horizontal, rectangle split parts=2] {$\top$\nodepart{two}$\bot$};
  
	\node (n16e0) at (38.5bp,90.03bp) [draw,fill=white,rectangle split,
	rectangle split horizontal, rectangle split parts=2] {$\top$\nodepart{two}};
  
	\node (n16e1) at (103.5bp,90.03bp) [draw,fill=white,rectangle split,
	rectangle split horizontal, rectangle split parts=2] {$\bot$\nodepart{two}$\bot$};
  
	\draw [->] (n15) -- (n15e0);
	\draw [->] (n15) -- (n15e1);
	\draw [->] (n16) -- (n16e0);
	\draw [->] (n16) -- (n16e1);
	\draw [*->] (n16e0.two)+(0.2em,0.4em) -- (n15);
\end{tikzpicture}}
		\caption{$\mbf{s} = (0,1,1,1)$}
		\label{fig:axp_sdd01}
	\end{subfigure}%
	\begin{subfigure}[b]{0.25\textwidth}
		\scalebox{0.8175}{\begin{tikzpicture}[>=latex',line join=bevel,]
	node distance={2.5cm}, thin,
	\node (n15) at (55.5bp,45.263bp) [draw,fill=white,circle] {5};
	\node (n16) at (70.5bp,132.79bp) [draw,fill=white,circle] {3};
	\node (n15e0) at (23.5bp,5.5bp) [draw,fill=white,rectangle split,
	rectangle split horizontal, rectangle split parts=2] {$\bot$\nodepart{two}$\bot$};
  
	\node (n15e1) at (88.5bp,5.5bp) [draw,fill=white,rectangle split,
	rectangle split horizontal, rectangle split parts=2] {$\top$\nodepart{two}$\bot$};
  
	\node (n16e0) at (38.5bp,90.03bp) [draw,fill=white,rectangle split,
	rectangle split horizontal, rectangle split parts=2] {$\neg x_2$\nodepart{two}};
  
	\node (n16e1) at (103.5bp,90.03bp) [draw,fill=white,rectangle split,
	rectangle split horizontal, rectangle split parts=2] {$x_2$\nodepart{two}$\bot$};
  
	\draw [->] (n15) -- (n15e0);
	\draw [->] (n15) -- (n15e1);
	\draw [->] (n16) -- (n16e0);
	\draw [->] (n16) -- (n16e1);
	\draw [*->] (n16e0.two)+(0.2em,0.4em) -- (n15);
\end{tikzpicture}}
		\caption{$\mbf{s} = (0,0,1,1)$}
		\label{fig:axp_sdd02}
	\end{subfigure}%
	\begin{subfigure}[b]{0.25\textwidth}
		\scalebox{0.8175}{\begin{tikzpicture}[>=latex',line join=bevel,]
	node distance={2.5cm}, thin,
	\node (n15) at (55.5bp,45.263bp) [draw,fill=white,circle] {5};
	\node (n16) at (70.5bp,132.79bp) [draw,fill=white,circle] {3};
	\node (n15e0) at (23.5bp,5.5bp) [draw,fill=white,rectangle split,
	rectangle split horizontal, rectangle split parts=2] {$x_3$\nodepart{two}$\bot$};
  
	\node (n15e1) at (88.5bp,5.5bp) [draw,fill=white,rectangle split,
	rectangle split horizontal, rectangle split parts=2] {$\neg x_3$\nodepart{two}$\bot$};
  
	\node (n16e0) at (38.5bp,90.03bp) [draw,fill=white,rectangle split,
	rectangle split horizontal, rectangle split parts=2] {$\neg x_2$\nodepart{two}};
  
	\node (n16e1) at (103.5bp,90.03bp) [draw,fill=white,rectangle split,
	rectangle split horizontal, rectangle split parts=2] {$x_2$\nodepart{two}$\bot$};
  
	\draw [->] (n15) -- (n15e0);
	\draw [->] (n15) -- (n15e1);
	\draw [->] (n16) -- (n16e0);
	\draw [->] (n16) -- (n16e1);
	\draw [*->] (n16e0.two)+(0.2em,0.4em) -- (n15);
\end{tikzpicture}}
		\caption{$\mbf{s} = (0,0,0,1)$}
		\label{fig:axp_sdd03}
	\end{subfigure}%
	\begin{subfigure}[b]{0.25\textwidth}
		\scalebox{0.8175}{\begin{tikzpicture}[>=latex',line join=bevel,]
	node distance={2.5cm}, thin,
	\node (n15) at (55.5bp,45.263bp) [draw,fill=white,circle] {5};
	\node (n16) at (70.5bp,132.79bp) [draw,fill=white,circle] {3};
	\node (n15e0) at (23.5bp,5.5bp) [draw,fill=white,rectangle split,
	rectangle split horizontal, rectangle split parts=2] {$x_3$\nodepart{two}$x_4$};
  
	\node (n15e1) at (88.5bp,5.5bp) [draw,fill=white,rectangle split,
	rectangle split horizontal, rectangle split parts=2] {$\neg x_3$\nodepart{two}$\bot$};
  
	\node (n16e0) at (38.5bp,90.03bp) [draw,fill=white,rectangle split,
	rectangle split horizontal, rectangle split parts=2] {$\neg x_2$\nodepart{two}};
  
	\node (n16e1) at (103.5bp,90.03bp) [draw,fill=white,rectangle split,
	rectangle split horizontal, rectangle split parts=2] {$x_2$\nodepart{two}$x_4$};
  
	\draw [->] (n15) -- (n15e0);
	\draw [->] (n15) -- (n15e1);
	\draw [->] (n16) -- (n16e0);
	\draw [->] (n16) -- (n16e1);
	\draw [*->] (n16e0.two)+(0.2em,0.4em) -- (n15);
\end{tikzpicture}}
		\caption{$\mbf{s} = (0,0,0,0)$}
		\label{fig:axp_sdd04}
	\end{subfigure}%
\caption{Example of computing one AXp for $\kappa(\mbf{v} = (0, 0, 0, 0)) = 0$.
		Each sub-figure represents an SDD $\kappa|_{\mbf{s}, \mbf{v}}$.
		(Note that, the SDDs are presented in intermediate form for better illustrating 
		the procedure of calculating the explanation.)
}
\label{fig:axp_sdd}.		
\end{figure}

\section{Generalizations} \label{sec:genxp}

\subsection{Explanations for Generalized Decision Functions}

We consider the setting of multi-class classification, with 
$\fml{K}=\{c_1,\ldots,c_K\}$, where each class $c_j$ is associated
with a total function $\kappa_j:\mbb{F}\to\{0,1\}$, such that the
class $c_j$ is picked iff $\kappa_j(\mbf{v})=1$.
%
%
For example, \emph{decision sets}~\cite{leskovec-kdd16} represent one
such example of multi-class classification, where each function
$\kappa_j$ is represented by a DNF, and a \emph{default rule} is used
to pick some class for the points $\mbf{v}$ in feature space for which
all $\kappa_j(\mbf{v})=0$.
Moreover, decision sets may exhibit
\emph{overlap}~\cite{ipnms-ijcar18}, i.e.\ the existence of points
$\mbf{v}$ in feature space such that there exist $j_1\not=j_2$
and $\kappa_{j_1}(\mbf{v})=\kappa_{j_2}(\mbf{v})=1$. In practice, the
existence of overlap can be addressed by randomly picking one of the
classes for which $\kappa_{j}(\mbf{v})=1$. Alternatively, the learning
of DSes can ensure that overlap is non-existing~\cite{ipnms-ijcar18}.

This section considers generalized versions of DSes, by removing the
restriction that each class is computed with a DNF.
Hence, a \emph{generalized decision function} (GDF) is such that each
function $\kappa_j$ is allowed to be an \emph{arbitrary} boolean
function.
Furthermore, the following two properties of GDFs are considered:
\begin{definition}[Binding GDF] \label{def:bgdf}
  A GDF is \emph{binding} if,
  \begin{equation}
    \forall(\mbf{x}\in\mbb{F}).\biglor_{1\le{j}\le{K}}\kappa_{j}(\mbf{x})
  \end{equation}
\end{definition}
(Thus, a binding GDF requires no default rule, since for any point
$\mbf{x}$ in feature space, there is at least one $\kappa_j$ such that
$\kappa_j(\mbf{x})$ holds.)

\begin{definition}[Non-overlapping GDF] \label{def:nogdf}
  A GDF is \emph{non-overlapping} if,
  \begin{equation}
    \forall(\mbf{x}\in\mbb{F}).
    \bigland_{\substack{1\le{j_1},{j_2}\le{K}\\{j_1}\not={j_2}}}(\neg\kappa_{j_1}(\mbf{x})\lor\neg\kappa_{j_2}(\mbf{x}))
  \end{equation}
\end{definition}
(Thus, a binding, non-overlapping GDF computes a total multi-class
classification function.)

%
Furthermore, we can establish conditions for a GDF to be binding and
non-overlapping:

\begin{proposition} \label{prop:gdfcond}
  A GDF is binding and non-overlapping iff the following formula is
  inconsistent:
  \begin{equation}
    \exists(\mbf{x}\in\mbb{F}).
    \kappa_{1}(\mbf{x})+\ldots+\kappa_{K}(\mbf{x})\not=1
  \end{equation}
\end{proposition}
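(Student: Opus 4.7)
The plan is to show that a GDF is simultaneously binding and non-overlapping exactly when, for every $\mbf{x}\in\mbb{F}$, precisely one $\kappa_j(\mbf{x})$ evaluates to $1$, i.e.\ $\kappa_1(\mbf{x})+\cdots+\kappa_K(\mbf{x})=1$. The existential formula in the statement is the negation of this universal claim, so the iff in the proposition is just the contrapositive of that equivalence. No heavy machinery is needed; the argument is a direct translation between propositional connectives and the arithmetic sum over $\{0,1\}$-valued functions.

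First I would unpack binding. Since each $\kappa_j$ is $\{0,1\}$-valued, the disjunction $\bigvee_{j} \kappa_j(\mbf{x})$ is true at $\mbf{x}$ iff at least one $\kappa_j(\mbf{x})=1$, which in arithmetic terms is $\kappa_1(\mbf{x})+\cdots+\kappa_K(\mbf{x})\ge 1$. Hence \autoref{def:bgdf} is equivalent to
\begin{equation*}
\forall(\mbf{x}\xin\mbb{F}).\;\kappa_1(\mbf{x})+\cdots+\kappa_K(\mbf{x})\ge 1.
\end{equation*}

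Next I would unpack non-overlapping. The clause $\neg\kappa_{j_1}(\mbf{x})\lor\neg\kappa_{j_2}(\mbf{x})$ fails at $\mbf{x}$ iff both $\kappa_{j_1}(\mbf{x})=1$ and $\kappa_{j_2}(\mbf{x})=1$, so the universally quantified conjunction in \autoref{def:nogdf} holds iff no two distinct indices are simultaneously $1$ at $\mbf{x}$. Because the $\kappa_j$ are $\{0,1\}$-valued, this is equivalent to
\begin{equation*}
\forall(\mbf{x}\xin\mbb{F}).\;\kappa_1(\mbf{x})+\cdots+\kappa_K(\mbf{x})\le 1.
\end{equation*}

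Conjoining the two reformulations yields $\forall(\mbf{x}\xin\mbb{F}).\;\kappa_1(\mbf{x})+\cdots+\kappa_K(\mbf{x})=1$, which is precisely the negation of the existential formula in the statement. Therefore the GDF is binding and non-overlapping iff that existential formula has no satisfying assignment, i.e.\ is inconsistent. The only subtlety is the semantic bridge between Boolean $\lor,\land,\neg$ and the arithmetic sum, and this is immediate once one uses that each $\kappa_j$ takes values in $\{0,1\}$; no real obstacle arises.
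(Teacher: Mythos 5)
Your proof is correct and follows essentially the same route as the paper's: the paper splits the failure of the existential formula into the two cases $\kappa_1(\mbf{x})+\cdots+\kappa_K(\mbf{x})=0$ (non-binding) and $\ge 2$ (overlapping), which is exactly the contrapositive form of your equivalences $\text{binding}\Leftrightarrow(\text{sum}\ge 1)$ and $\text{non-overlapping}\Leftrightarrow(\text{sum}\le 1)$. Your write-up is somewhat more explicit about the Boolean-to-arithmetic translation, but the underlying argument is the same.
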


\begin{proof}
  Given \autoref{def:bgdf} and \autoref{def:nogdf},
  \begin{enumerate}[nosep]
  \item Clearly, there exists a point $\mbf{v}\in\mbb{F}$ such that
    $\kappa_{1}(\mbf{v})+\ldots+\kappa_{K}(\mbf{v})=0$ iff the GDF
    is non-binding;
  \item Clearly, there exists $\mbf{v}\in\mbb{F}$ such that
    $\kappa_{1}(\mbf{v})+\ldots+\kappa_{K}(\mbf{v})\ge2$ iff the GDF
    is overlapping.
  \end{enumerate}
  Thus, the result follows.
  \qedhere
\end{proof}

\begin{remark}
  For a GDF where each function is represented by a boolean circuit,
  deciding whether a GDF is binding and non-overlapping is in coNP.
  In practice, checking whether a GDF is binding and non-overlapping
  can be decided with a call to an NP oracle.
\end{remark}

%
%

\begin{proposition}
  For a binding and non-overlapping GDF, such that each classification
  function is represented by a sentence of a KC language satisfying
  the query $\tbf{CO}$ and the transformation $\tbf{CD}$, then one AXp
  or one CXp can be computed in polynomial time. Furthermore,
  enumeration of AXps/CXps can be achieved with one call to an NP
  oracle per computed explanation.
\end{proposition}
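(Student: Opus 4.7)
The plan is to reduce the multi-class setting to a family of binary checks on the $K-1$ non-target classification functions. Since the GDF is binding and non-overlapping, the prediction on $\mathbf{x}$ equals $c$ if and only if $\kappa_c(\mathbf{x}) = 1$, and (by \autoref{prop:gdfcond}) this is equivalent to $\kappa_j(\mathbf{x}) = 0$ for every $j \neq c$. Thus a set $\mathcal{X} \subseteq \mathcal{F}$ is a weak AXp iff
\[
\forall (\mathbf{x}\in\mathbb{F}).\;\bigwedge\nolimits_{i\in\mathcal{X}}(x_i = v_i) \;\limply\; \bigwedge\nolimits_{j \neq c} \neg \kappa_j(\mathbf{x}),
\]
which holds iff, for every $j \neq c$, the conditioned sentence $\kappa_j|_{\mathbf{s},\mathbf{v}}$ is inconsistent. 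Symmetrically, $\mathcal{Y} \subseteq \mathcal{F}$ is a weak CXp iff there exists some $j \neq c$ for which $\kappa_j|_{\mathbf{s}',\mathbf{v}}$ is consistent (with the selector convention of \autoref{ssec:1cxp}). This observation is the heart of the argument and explains why only $\mathbf{CO}$ and $\mathbf{CD}$, and not $\mathbf{VA}$, are required: we always reason about the non-predicted classes, whose sentences we want to be \emph{inconsistent} (for AXp) or \emph{consistent} (for CXp).

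Given this reduction, I would reuse the greedy deletion schema of \autoref{alg:findaxp} and \autoref{alg:findcxp} with the weak-AXp/weak-CXp test replaced by the $K-1$-fold conjunction above. Each iteration performs, for every non-target $j$, one conditioning (polynomial by $\mathbf{CD}$) followed by one consistency query (polynomial by $\mathbf{CO}$). Hence one iteration costs $O(K)$ polynomial operations, and the overall deletion loop runs in time polynomial in $m$, $K$ and the sizes of the representations of $\kappa_1,\ldots,\kappa_K$. This yields the first half of the statement for both AXps and CXps.

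For enumeration, I would adapt \autoref{alg:enum} in the same way: the MARCO-style outer loop over the blocking formula $\mathcal{H}$ is unchanged, while the seed-test at line~\ref{alg:exp:ln07} and the calls to $\findaxp$/$\findcxp$ use the multi-class weak-explanation test described above. The SAT call on $\mathcal{H}$ in line~\ref{alg:exp:ln03} remains the only NP oracle invocation; each returned model either triggers an AXp extraction (with its blocking clause) or a CXp extraction (with the dual blocking clause), and correctness and non-repetition follow exactly as in the binary case. Since $\findaxp$ and $\findcxp$ are now polynomial, one obtains one computed explanation per NP oracle call, as claimed.

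The main obstacle is simply to formalize the first paragraph: one must verify that binding + non-overlapping is exactly the right hypothesis to collapse the multi-class weak-AXp/weak-CXp conditions into conjunctions/disjunctions of $\mathbf{CO}$ queries on individual conditioned $\kappa_j$'s. Once this equivalence is spelled out via \autoref{def:bgdf}, \autoref{def:nogdf}, and \autoref{prop:gdfcond}, the remainder of the proof is a direct transcription of \cref{ssec:1axp,ssec:1cxp} and \autoref{alg:enum} to the multi-class setting.
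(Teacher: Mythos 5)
Your proposal is correct and follows essentially the same route as the paper's own (much terser) proof sketch: both reduce the multi-class weak-AXp/weak-CXp tests to consistency checks on the conditioned non-target functions $\kappa_j$, $j\neq c$, using binding and non-overlapping to justify the equivalence, and both then reuse the greedy deletion algorithms and the MARCO-style enumeration. Your write-up merely makes explicit the key equivalence that the paper leaves implicit.
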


\begin{proof}[Proof sketch]
  For computing one AXp of class $c_p$, one can iteratively check
  consistency of the remaining of literals on the other functions
  $q\not=p$. Conditioning is used to reflect, in the classifiers, the
  choices made, i.e.\ which literals are included or not in the AXp.
  For a CXp a similar approach can be used. For enumeration, we can
  once again exploit a MARCO-like algorithm.
  \qedhere
\end{proof}

\begin{corollary}
  For a binding non-overlapping GDF, where each $\kappa_j$ is
  represented by a $\dnnf$, one AXp and one CXp can be computed in
  polynomial time.
  Furthermore,
  enumeration of AXps/CXps can be achieved with one call to an NP
  oracle per computed explanation. 
\end{corollary}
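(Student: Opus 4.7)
The plan is to derive this corollary as a direct instantiation of the preceding proposition, with the concrete KC language chosen to be $\dnnf$. So the only work to do is to verify that $\dnnf$ fulfils the two requirements of the proposition, namely the query $\tbf{CO}$ and the transformation $\tbf{CD}$, and then quote the conclusion.

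First I would invoke the knowledge compilation map of Darwiche and Marquis~\cite{darwiche-jair02}: $\dnnf$ admits a polynomial-time consistency test (this is the archetypal benefit of decomposability) and conditioning on a consistent term is realised by replacing each variable by the appropriate constant and then propagating $\top$/$\bot$ through the $\land$/$\lor$ gates, which is linear in the size of the circuit and preserves the $\dnnf$ structure. Hence, for each $\kappa_j$ represented as a $\dnnf$, the map $(\mbf{s},\mbf{v})\mapsto\kappa_j|_{\mbf{s},\mbf{v}}$ and the consistency check on the result are both polynomial.

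Next I would lift the generic AXp/CXp subroutines from the proposition. For a target class $c_p$ with $\kappa_p(\mbf{v})=1$, the greedy loop of \autoref{alg:oneaxp} traverses the features and, for each tentative drop, applies $\tbf{CD}$ once to every $\kappa_q$ with $q\ne p$ and then $\tbf{CO}$ to certify that no competing class can be activated. Because the GDF is binding and non-overlapping, the condition that all conditioned $\kappa_q|_{\mbf{s},\mbf{v}}$ ($q\ne p$) are inconsistent is precisely the weak-AXp condition on the remaining features; the CXp case is dual, using consistency of some $\kappa_q|_{\mbf{s},\mbf{v}}$ with $q\ne p$. Both procedures therefore run in polynomial time in the size of the GDF.

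For the enumeration claim I would reuse \autoref{alg:enum} verbatim: a SAT oracle over the running set $\fml{H}$ supplies a seed $\mbf{p}$, which is then fed to the polynomial-time AXp or CXp routine above, and the resulting explanation is blocked via the standard MARCO clause, guaranteeing that no explanation is repeated. Each iteration therefore incurs exactly one NP oracle call together with polynomial-time $\dnnf$ reasoning. I do not anticipate a genuine obstacle; the only bookkeeping point is that the per-class conditioning and consistency tests are made on all $K-1$ non-target $\kappa_q$, so one must confirm that $K$ is treated as part of the input (which it is in the statement of the proposition), keeping the total cost polynomial.
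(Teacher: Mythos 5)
Your proposal is correct and follows essentially the same route as the paper: the corollary is a direct instantiation of the preceding proposition once one recalls from the knowledge compilation map that $\dnnf$ satisfies \tbf{CO} and \tbf{CD}, and your fleshing-out of the greedy AXp/CXp loops and the MARCO-style enumeration mirrors the paper's proof sketch of that proposition. You also correctly exploit the binding and non-overlapping assumptions to reduce every test to consistency checks on the competing $\kappa_q$, which is the essential point, since $\dnnf$ does not support \tbf{VA} in polynomial time.
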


Thus, for GDFs that are both binding and non-overlapping, even if each
function is represented by the fairly succinct $\dnnf$, one can still
compute AXps and CXps efficiently. Furthermore, a
MARCO-like~\cite{lpmms-cj16} can be used for enumerating AXps
and CXps.

The results above can be generalized to the case of multi-valued
classification, where binarization (one-hot-encoding) can serve for
representing multi-valued (non-continuous) features. Alternative
approaches have been investigated in recent work~\cite{marquis-kr20}.


\subsection{Total Congruent Classifiers}

We can build on the conditions for GDFs to devise relaxed conditions
for poly-time explainability.

\begin{definition}[Total Classifier]
  A classification function is total if for any point
  $\mbf{v}\in\mbb{F}$, there is a prediction $\kappa(\mbf{v})=c$, with
  $c\in\fml{K}$.
\end{definition}

\begin{definition}[Congruent Classifier]
  A classifier is \emph{congruent} if the computational complexity of
  deciding the consistency of $\kappa(\mbf{x})=c$ is the same for any
  $c\in\fml{K}$.
\end{definition}

Similarly, we can define a total congruent KR language. For a total
congruent KR language, the query $\tbf{CO}$ is satisfied iff 
deciding $\kappa(\mbf{v})=c$ is in polynomial time for any
$c\in\fml{K}$.
Given the above, the same argument used for GDFs, can be used to prove
that,

\begin{proposition}
  For a total congruent KR language, which satisfies the operations of
  \tbf{CO} and \tbf{CD}, one AXp and one CXp can be computed in
  polynomial time.
\end{proposition}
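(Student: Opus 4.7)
The plan is to follow essentially the same greedy deletion template used in Algorithms~\ref{alg:oneaxp} and~\ref{alg:onecxp}, but to replace the role of \tbf{VA} (which we are not given) by iterated uses of \tbf{CO} across the class set~$\fml{K}$. The key observation, which totality and congruence together deliver, is that validity of $\kappa(\mbf{x})=c$ under a conditioning term is equivalent to the joint inconsistency of $\kappa(\mbf{x})=c'$ for every $c'\in\fml{K}\setminus\{c\}$: since the classifier is total, every point of $\mbb{F}$ realises some class, so if no other class is reachable under the conditioning then class $c$ must be forced.

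Concretely, I would first restate the weak-AXp and weak-CXp conditions in the multi-class setting. Given instance $\mbf{v}$ with $\kappa(\mbf{v})=c$ and a candidate $\fml{S}\subseteq\fml{F}$, $\fml{S}$ is a weak AXp iff, after conditioning $\kappa$ on the term $\rho_{\fml{S},\mbf{v}}=\bigwedge_{i\in\fml{S}}(x_i=v_i)$, the formula $\kappa(\mbf{x})=c'$ is inconsistent for all $c'\neq c$; dually, $\fml{F}\setminus\fml{S}$ is a weak CXp iff there exists some $c'\neq c$ such that the conditioned $\kappa(\mbf{x})=c'$ is consistent. Both characterisations use only \tbf{CD} (to apply the conditioning $\rho_{\fml{S},\mbf{v}}$) and \tbf{CO} (to check each class), and by the congruence hypothesis every such \tbf{CO} call runs in polynomial time regardless of which $c'\in\fml{K}$ is being tested.

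I would then instantiate Algorithms~\ref{alg:findaxp} and~\ref{alg:findcxp} with the conditioner $\varsigma_A=\varsigma_C$ given by \tbf{CD}, and replace the single-test line~\ref{alg:axp:cova} by the conjunction (for AXp) or disjunction (for CXp) of $|\fml{K}|-1$ polynomial-time \tbf{CO} calls on the classifiers $\kappa(\mbf{x})=c'$ for $c'\neq c$. The greedy loop iterates $m$ times; at each iteration we perform one \tbf{CD} and at most $|\fml{K}|-1$ \tbf{CO} calls, yielding total cost $O(m\cdot|\fml{K}|)$ oracle calls, each polynomial in the input. Subset-minimality of the returned set follows from the standard greedy argument exactly as in the binary case: an element is kept only when its removal breaks the weak-AXp (resp.\ weak-CXp) property with respect to the currently surviving set.

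The only delicate point is to justify why the equivalence in the first paragraph holds strictly, and this is where totality is essential: without it, a conditioned sub-cube might contain points where $\kappa$ is undefined, and emptiness of the other classes would not force class $c$. Congruence is what guarantees that the polynomial bound applies uniformly across the $|\fml{K}|-1$ alternative-class tests, so that a single poly-time \tbf{CO} oracle suffices for the entire loop. With these two hypotheses the multi-class reduction to \tbf{CO}+\tbf{CD} is immediate, and the polynomial-time bounds of \autoref{prop:AXp2} and \autoref{prop:CXp2} transfer to this setting.
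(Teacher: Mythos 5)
Your proposal is correct and follows essentially the same route as the paper, which proves this proposition by appeal to the GDF argument: run the greedy deletion loop and, in place of \tbf{VA}, iteratively test consistency of $\kappa(\mbf{x})=c'$ for each $c'\neq c$ under the conditioning, with totality guaranteeing that joint inconsistency of the other classes forces class $c$ and congruence guaranteeing each such \tbf{CO} call is polynomial. Your write-up merely makes explicit the equivalence and the $O(m\cdot|\fml{K}|)$ oracle-call count that the paper leaves implicit in its proof sketch.
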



%

\section{Experimental Results} \label{sec:res}
In this section, we present the experiments carried out  to assess the
practical effectiveness of the proposed approach.
The assessment is performed on the computation of  AXps and CXps
for d-DNNFs and SDDs.
The experiments consider a selection
of 11 binary datasets that are publicly available and originate from the
Penn Machine Learning Benchmarks~\cite{Olson2017PMLB} and the openML repository \cite{OpenML2013}.
To learn d-DNNFs (resp. SDDs), we first train Read-Once Decision Tree (RODT) models on the given
binary datasets and then compile the obtained RODTs into d-DNNFs (resp. SDDs).
(A RODT is a \textit{free} BDD (FBDD) whose underlying graph is a tree~\cite{barcelo-nips20,wegener-bk00},
where FBDD is defined as a BDD that satisfies the \textit{read-once property}:
each variable is encountered at most once on each path from the root to a leaf node.)
The compilation of RODTs to d-DNNFs can be easily done by direct mapping,
since RODT is a special case of FBDDs, and FBDDs is a subset of d-DNNFs~\cite{darwiche-jair02}
To compile SDDs,
we use the PySDD package\footnote{https://github.com/wannesm/PySDD},
which is implemented in Python and Cython. (Note that, we tuned PySDD to use
\textit{dynamic minimization}  \cite{choi2013dynamic} during the construction in order
to reduce the size of the  SDDs.)
The PySAT package \cite{imms-sat18} is used to instrument incremental SAT
oracle calls in  AXp/CXp enumeration.
Lastly, The experiments are performed on a MacBook Pro with a 6-Core Intel
Core~i7 2.6~GHz processor with 16~GByte RAM, running macOS Big Sur.

PySDD wraps the famous SDD package\footnote{http://reasoning.cs.ucla.edu/sdd/}
which offers canonical SDDs\footnote{%
  Since PySDD offers canonical SDDs, the \tbf{CD} transformation is
  not implemented in worst-case polynomial
  time~\cite{darwiche-aaai15}.
  However, in practice, this was never an issue in our experiments.}.
Employing canonical SDDs allows consistency and validity checking to
be done in constant time (If the canonical SDD is inconsistent
(resp. valid) then it is a single node labeled with $\bot$
(resp. $\top$)~\cite{darwiche-ijcai11}), so in practice may improve
the efficiency of explaining SDD classifiers.
\setlength{\tabcolsep}{5pt}
\rowcolors{2}{gray!10}{}
\let\lpr\undefined
\let\rpr\undefined
\newcommand{\lpr}{(}
\newcommand{\rpr}{)}

\begin{table*}[t]
\centering
\resizebox{\textwidth}{!}{
  \begin{tabular}{l>{\lpr}S[table-format=4.0,table-space-text-pre=\lpr]S[table-format=3.0,table-space-text-post=\rpr]<{\rpr}c cc  c ccc  ccc  cc cc}
\toprule[1.2pt]
\rowcolor{white}
\multirow{2}{*}{\bf Dataset} & \multicolumn{2}{c}{\multirow{2}{*}{\bf (\#F~~~~\#S)}}  & \multicolumn{3}{c}{\bf Model} & \multicolumn{1}{c}{\bf XPs}  & \multicolumn{3}{c}{\bf AXp} & \multicolumn{3}{c}{\bf CXp}  &  \multicolumn{2}{c}{\bf d-DNNF} & \multicolumn{2}{c}{\bf SDD}\\
  \cmidrule[0.8pt](lr{.75em}){4-6}
  \cmidrule[0.8pt](lr{.75em}){7-7}
  \cmidrule[0.8pt](lr{.75em}){8-10}
  \cmidrule[0.8pt](lr{.75em}){11-13}
  \cmidrule[0.8pt](lr{.75em}){14-15}
  \cmidrule[0.8pt](lr{.75em}){16-17} 
\rowcolor{white}
& \multicolumn{2}{c}{}   & {\bf \%A} & {\bf \#ND} & {\bf \#NS}  &  {\bf avg} & {\bf M} &  {\bf avg} & {\bf \%L} &  {\bf M} &  {\bf avg} & {\bf \%L} & {\bf M}  &  {\bf avg}   & {\bf M}  &  {\bf avg}  \\
\toprule[1.2pt]

corral & 6 & 160 & 100 & 35 & 12 &  4 & 4  & 2 & 34 & 4  & 2 & 22 & 0.004 &  0.001 & 0.001 & 0.000 \\
db-bodies & 4702 & 64 & 100 & 22 & 21 & 4 & 3  & 2 & 1 & 4  & 3 & 1 &  0.004 &  0.002  & 0.001 & 0.000 \\
db-bodies-stemmed & 3721 & 64 & 84.6 & 14 & 15  & 4 & 2  & 1 & 1 & 4  & 2 & 1 &  0.003  & 0.002   &  0.001 & 0.000\\
db-subjects & 242 & 64 & 84.6 & 45 &  28 & 6 & 4  & 2 & 2 & 6  & 4 & 1 &  0.005 &  0.002  & 0.004 & 0.001 \\
db-subjects-stemmed & 229  & 64 &  92.3 & 54 &  31 &  7 & 4  & 2 & 2 & 7  & 5 & 1 &  0.006 &  0.003   &0.004 & 0.001 \\
mofn\_3\_7\_10 & 10 & 1324 & 97.7 & 107 & 34 & 11 & 28  & 4 & 32 & 28  & 6 & 24 &  0.072 &  0.011  &  0.008 & 0.001\\
mux6 & 6 & 128 &  100 &  62 & 22 & 5 & 4  & 2 & 51 & 4  & 3 & 24 &  0.009 &  0.003   & 0.002 &  0.001\\
parity5+5 & 10 & 1124 &  85.7 & 484 & 96 & 9 & 12  & 2 & 66 & 19  & 7 & 14 &  0.193 &  0.038   & 0.009 &  0.002 \\
spect & 22 & 267 &  85.1 & 108  & 55 &  14 & 36  & 8 & 22 & 13  & 6 & 10 &  0.105 &  0.030   & 0.016 &  0.005 \\
threeOf9 & 9 & 512 &  96.1 & 76 & 37 & 7 & 15  & 3 & 38 & 14 & 4 & 19 &  0.023 &  0.006  & 0.005 &  0.001 \\
xd6 & 9 & 973 &  97.9 & 80 & 36 &  8 & 25  & 4 & 36 & 22  & 4 & 20 &  0.035 &  0.009   & 0.007 & 0.001\\

\bottomrule[1.2pt]
\end{tabular}
}
\caption{\footnotesize{Listing all  AXps  CXps for d-DNNFs and SDDs. 
Columns {\bf \#F} and {\bf \#S} report, resp., the number of features 
and the number of tested samples (instances), in the dataset. 
Sub-Column {\bf \%A} reports the (test) accuracy of the model and 
 {\bf \#ND}  (resp.\ {\bf \#NS}) shows the total number of nodes in 
 the  compiled d-DNNF (resp.\ SDD).
 Column {\bf XPs}  reports  the average number of total explanations 
(AXp's and CXp's). 
Sub-columns {\bf M} and {\bf avg} of column {\bf AXp} 
(resp.,  {\bf CXp}) show, resp., the maximum and average 
number of explanations. The average length of an explanation  (AXp/CXp) is given 
as {\bf \%L}.
Sub-columns  {\bf M} and {\bf avg}  of column {\bf d-DNNF} (resp. SDD) 
report, resp.,  maximal and average runtime (in seconds) 
to list all the explanations for all tested instances.
}}
\label{tab:dnnf-sdd}
\end{table*}

\autoref{tab:dnnf-sdd} summarizes the obtained results of explaining
d-DNNFs and SDDs.
(Note that, for each dataset, the compiled d-DNNF and SDD represent
the same decision function of the learned RODT.
Hence, the computed explanations are the same as well.)
Performance-wise,  the maximum running time to enumerate all AXps/CXps is less
than 0.2 sec for all tested d-DNNFs, and is less than 0.02 sec for all tested SDDs.
On average, it takes at most 0.038 sec for enumerating all the explanations (AXps/CXps) of d-DNNFs;
for SDDs, it takes a few milliseconds to enumerate all the explanations (AXps/CXps).
Thus the overall cost of the SAT oracle calls performed by
the enumeration algorithm is negligible.
Hence, it is plain that instrumenting SAT oracle calls does not constitute
a bottleneck to listing effectively all the AXps/CXps of the d-DNNFs and SDDs.
Apart from the runtime, one observation is that
the total number of AXps and CXps per instance is relatively small.
Moreover, if compared with the total number of features,
the average length of an explanation (AXp or CXp) is also
relatively small.
We compared the raw performance of explaining d-DNNFs and SDDs.
\autoref{fig:runtime} depicts a cactus plot showing the total runtime
spent on AXp-and-CXp enumeration for all instances of each dataset.
As can be seen, both d-DNNF and SDD explanation procedures are able to
finish successful enumeration of AXps/CXps for all instances of the
datasets in a few seconds.
Unsurprisingly, the runtimes in case of SDDs tend to be overall better
than those for d-DNNFs.
Indeed, explaining SDDs is on average 6 times faster than explaining
d-DNNFs.
One factor contributing to this performance difference is that in
practice in case of SDDs consistency and validity checking can be done
in constant time.

\begin{figure}[t]
\centering
\includegraphics[width=0.6\textwidth]{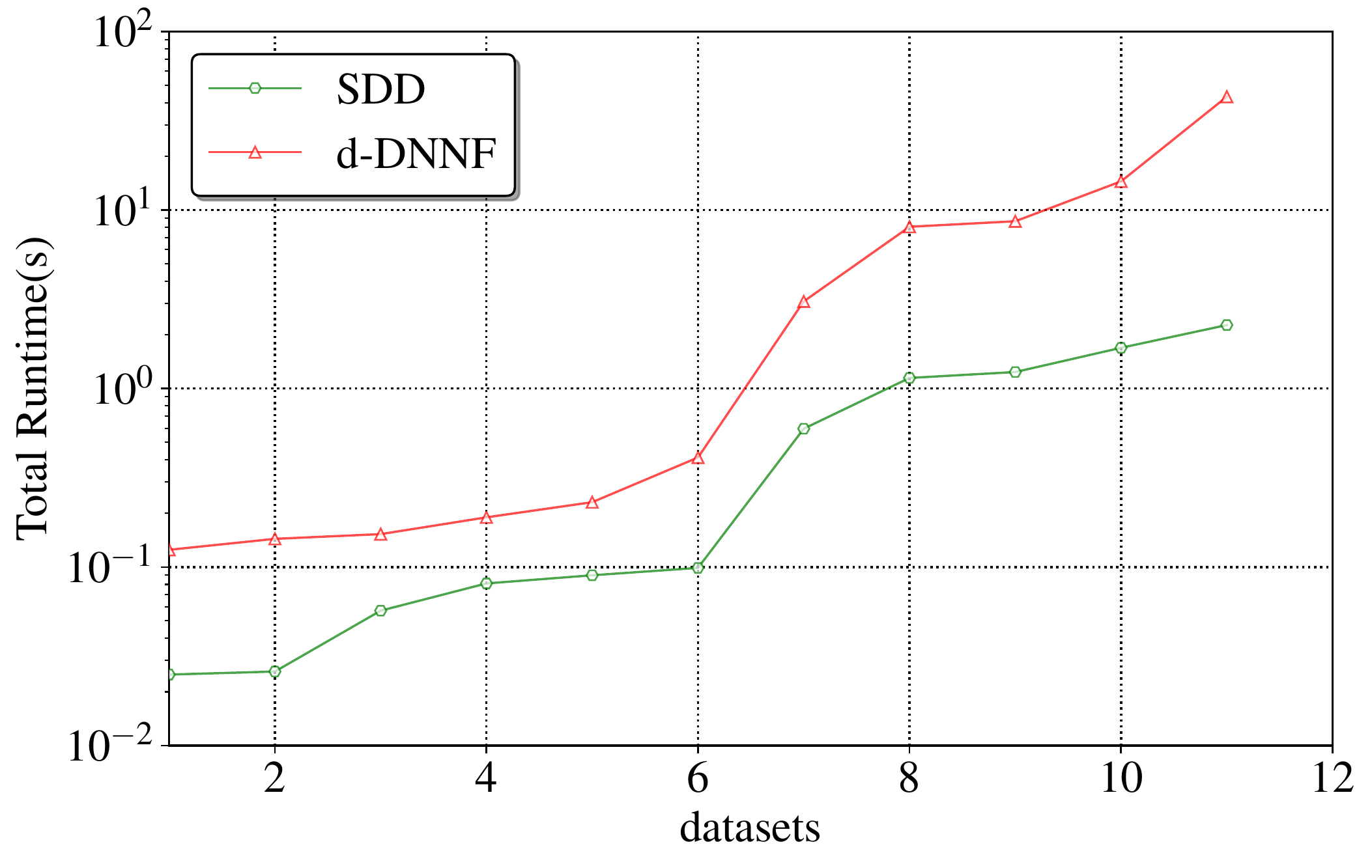}
\caption{Comparison of total runtime (in seconds) spent to explain all instances of each dataset for d-DNNFs vs.\ SDDs.}
\label{fig:runtime}
\end{figure}

To conclude, the results shown above, for the concrete case of
classifiers represented in the d-DNNF and SDD languages, support the
paper's theoretical claims from a practical side that, if the underlying KC language
implements polynomial-time \tbf{CO} and \tbf{VA} queries as well as
the \tbf{CD} transformation, then
(i)~the polynomial-time computation of one AXp/CXp in practice takes a
negligible amount of time, which together with
(ii)~making a single SAT oracle call per explanation makes
(iii)~the enumeration of (some/all) XPs (AXps and CXps) highly
efficient in practice.

\section{Conclusions}
\label{sec:conc}

This paper proves that for any classifier that can be represented with
a d-DNNF, both one AXp and on CXp can be computed in polynomial
time on the size of the d-DNNF. Furthermore, the paper shows that
enumeration of AXps and CXps can be implemented with one NP oracle
call per explanation. The experimental evidence confirms that for
small numbers of explanations, the cost of enumeration is negligible.
In addition, the paper proposes conditions for generalized decision
functions to be explained in polynomial time. Concretely, the paper
develops conditions which allow generalized decision functions
represented with DNNFs to be explainable in polynomial time.
Finally, the paper proposes general conditions for a classifier to be
explained in polynomial time.
The experimental results validate the scability of the polynomial time
algorithms and, more importantly, the scalability of oracle-based
enumeration.


\input{paper.bibl}


\end{document}